\pgfplotsset{compat=1.17} 
\providecommand{\varitem}{} % to keep LaTeX quiet
\newenvironment{axioms}[1]
 {\renewcommand\varitem[1]{\item[\textbf{#1\arabic{enumi}\rlap{$##1$}.}]%
    \edef\@currentlabel{#1\arabic{enumi}{$##1$}}}%
  \enumerate[label=\textbf{(#1\arabic*)}, ref=#1\arabic*,leftmargin=\widthof{[\textbf{(#10)}]}]}
 {\endenumerate}
\begin{document}
    	\title{When Should You Defend Your Classifier\,? --\\ {\normalsize A Game-theoretical Analysis of Countermeasures against Adversarial Examples}\thanks{All authors are supported by the Austrian Science Fund (FWF) and the Czech Science Foundation (GACR) under grant no. I 4057-N31 (``Game Over Eva(sion)''). Tomas Pevny was additionally supported by Czech Ministry of Education 19-29680L and  by the OP VVV project CZ.02.1.01/0.0/0.0/16\_019/0000765 ``Research Center for Informatics''.}}
	\titlerunning{When Should You Defend Your Classifier\,?}
	% If the paper title is too long for the running head, you can set
	% an abbreviated paper title here
	%
	%\author{First Author\inst{1}\orcidID{0000-1111-2222-3333} \and
	%	Second Author\inst{2,3}\orcidID{1111-2222-3333-4444} \and
	%	Third Author\inst{3}\orcidID{2222--3333-4444-5555}}
	
	%%%%%%%%%%%%%%%%%%%%%%
	% ANONYMIZED AUTHORS 
	%%%%%%%%%%%%%%%%%%%%%%
	%\author{\vspace*{-.25cm}Anonymous}
	%\institute{}
	%%%%%%%%%%%%%%%%%%%%%%%%
	% DEANONYMIZED AUTHORS (Don't forget the \thanks}
	%%%%%%%%%%%%%%%%%%%%%%%%
 	\author{Maximilian Samsinger\inst{1} \and
 		Florian Merkle\inst{1}\and
 		Pascal Schöttle\inst{1}\and
 		Tomas Pevny\inst{2}}
 	
 	\authorrunning{M. Samsinger et al.}
 	\institute{Management Center Innsbruck, Universitätsstr. 15,
 		Innsbruck, Austria \email{\{maximilian.samsinger,florian.merkle,pascal.schoettle\}@mci.edu}\and
 		Department of Computers and Engineering,
 		Czech Technical University in Prague, Czech Republic
 		\email{pevnak@protonmail.ch}}
	\maketitle              % typeset the header of the contribution
	\begin{abstract}
	Adversarial machine learning, i.e., increasing the robustness of machine learning algorithms against so-called adversarial examples, is now an established field. Yet, newly proposed methods are evaluated and compared under unrealistic scenarios where costs for adversary and defender are not considered and either all samples or no samples are adversarially perturbed. We scrutinize these assumptions and propose the advanced adversarial classification game, which incorporates all relevant parameters of an adversary and a defender. Especially, we take into account economic factors on both sides and the fact that all so far proposed countermeasures against adversarial examples reduce accuracy on benign samples. Analyzing the scenario in detail, where both players have two pure strategies, we identify all best responses and conclude that in practical settings, the most influential factor might be the maximum amount of adversarial examples.
		\keywords{Adversarial classification \and Game theory \and Correct classification rate.}
	\end{abstract}
\newcommand{\acc}{\ensuremath{\mathrm{acc}}}
\newcommand{\rob}{\ensuremath{\mathrm{rob}}}
\newcommand{\CCR}{\ensuremath{\mathrm{CCR}}}
\newcommand{\ASR}{\ensuremath{\mathrm{ASR}}}
\newcommand{\InitialCosts}{\ensuremath{\mathrm{I}}}
\newcommand{\OngoingCosts}{\ensuremath{\mathrm{O}}}

\newcommand{\Utility}{\ensuremath{\mathrm{Utility}}}
\newcommand{\EPPS}{\ensuremath{\mathrm{EPPS}}}
\newcommand{\ADV}{\ensuremath{\mathrm{adv}}}
\newcommand{\DEF}{\ensuremath{\mathrm{def}\phantom{.}}\!}
\newcommand{\rmax}{\ensuremath{r_{\max}}}

\newcommand{\StrategySetADV}{\ensuremath{\mathcal{R}}}%{\ensuremath{\Delta^\ADV}}%
\newcommand{\StrategySetDEF}{\ensuremath{\mathcal{S}}}%{\ensuremath{\Delta^\DEF}}%

\newcommand{\Dacc}{\ensuremath{\Delta\acc}}%{\ensuremath{\acc_1-\acc_2}}%
\newcommand{\Drob}{\ensuremath{\Delta\rob}}%{\ensuremath{\rob_2-\rob_1}}%
\newcommand{\DCCR}{\ensuremath{\Delta\CCR(r;\rmax)}}%{\ensuremath{\CCR_1(r;\rmax)-\CCR_2(r;\rmax)}}%

\newcommand{\condADV}{\ensuremath{\frac{\rob_2 -1 + \mu^\ADV}{\Drob}}}
\newcommand{\condDEF}{\ensuremath{\frac{\Dacc-\Delta\mu^\DEF}{\Dacc + \Drob}}}

\newcommand{\precondDEF}{\ensuremath{\frac{\Dacc - \Delta\mu^\DEF}{\Dacc + \Drob}}}

\newcommand{\conv}{\ensuremath{\mathrm{\textbf{conv}}}}
\newcommand*{\defeq}{\mathrel{\vcenter{\baselineskip0.5ex \lineskiplimit0pt
			\hbox{\scriptsize.}\hbox{\scriptsize.}}}%
	=}
\newcommand*{\eqdef}{=\mathrel{\vcenter{\baselineskip0.5ex \lineskiplimit0pt
		\hbox{\scriptsize.}\hbox{\scriptsize.}}}%
}

\section{Introduction}

Machine learning and especially deep convolutional neural networks have become the de facto standard in a variety of computer vision related tasks~\cite{he2016deep,krizhevsky2012imagenet}. However, at least since 2004 the vulnerability of machine learning classifiers to carefully crafted attack points is known~\cite{dalvi2004adversarial} and has gained a lot of attention in the research community lately~\cite{biggio2018wild}.
Generally, attacks against machine learning classifiers can be divided into poisoning and evasion attacks~\cite{biggio2018wild}. In the former, the adversary can already tamper with the training procedure, and in the latter, the adversary tries to evade the classifier at inference time. We deal with evasion attacks only in this paper, as especially the vulnerability of neural networks to so-called adversarial examples~\cite{Szegedy13} caused a stir in the machine learning community.
Adversarial examples are benign input points to which tiny, maliciously crafted perturbations are added. For humans, these perturbed images are often indistinguishable from their unmodified counterparts~\cite{madry2017towards} but they trick a neural network into misclassification.
In the light of modern applications, including, but not limited to, facial recognition, self-driving cars, or spam filtering, adversarial examples pose an obvious security concern.

Yet, to the best of our knowledge, most of the papers on either new attack methods or new countermeasures deal with unrealistic conditions: the cost of creating adversarial examples is disregarded; the cost of training or defending the model is not considered; and the proportion of adversarial examples to be expected, is either 100\% or 0\%, but nothing in between. Taking into account the observation that all countermeasures against adversarial examples proposed so far strictly decrease the accuracy on benign images, we argue that depending on the expected amount of adversarial examples faced, defending a neural network classifier by one of these countermeasures might not be worth it. As a motivating example, we use the clean and robust accuracies from the famous paper~\cite{madry2017towards} to create Figure~\ref{fig:Madry}. Here, we show the expected proportion of adversarial examples ($x$-axis) and an evaluation metric (formally introduced in Sec.~\ref{sec:formalization}) that balances the classifier's adversarial robustness against its accuracy on benign inputs ($y$-axis).

This shows that when expecting less then 17\% adversarial examples, the undefended model (solid line) yields a higher correct classification rate than the defended model (dashed line) and thus, is favorable for the defender -- assuming that misclassifying a benign or and adversarial example induce the same penalty.

%It is clearly visible that it is not worth using the countermeasure of adversarial training (dashed line) if we expect less than 17\% adversarial examples and that for less adversarial examples, the model with standard training (solid line) is preferable.
%It is clearly visible that it is not worth using the countermeasure of adversarial training (dashed line) if we expect less than 17\% adversarial examples and that for less adversarial examples, the model with standard training (solid line) is preferable.

%%%%%%%%%%%%%%%%%%%% First figure
\begin{figure}[t]
	\centering
	\newcommand{\strength}{20}
	% This file was NOT created by tikzplotlib v0.9.6.
\begin{tikzpicture}

\definecolor{color0}{rgb}{0.12156862745098,0.466666666666667,0.705882352941177}
\definecolor{color1}{rgb}{1,0.498039215686275,0.0549019607843137}

\begin{axis}[
width=.75\columnwidth,
%height=.61803398875\columnwidth,
height=.45\columnwidth,
legend cell align={left},
legend style={fill opacity=0.8, draw opacity=1, text opacity=1, at={(0.03,0.03)}, anchor=south west, draw=white!80!black},
tick align=outside,
tick pos=left,
x grid style={white!69.0196078431373!black},
xlabel={Adversary's Strategy (attacking probability)},
xmajorgrids,
xmin=-0.05, xmax=1.05,
xtick style={color=black},
y grid style={white!69.0196078431373!black},
ylabel={Defender's CCR (see Def.~\ref{def:CCRij})},
ymajorgrids,
ymin=-0.04, ymax=1.04,
ytick style={color=black}
]
%%%%%% "simple"
%\addplot [thick, color0]
%table {%
%0 0.927
%1 0.008
%};
%\addlegendentry{Standard training}
%\addplot [thick, color0, dashed]
%table {%
%0 0.794
%1 0.437
%};
%\addlegendentry{Adversarial training}
%%%%%% "wide"
\addplot [thick, color0]
table {%
0 0.952
1 0.035
};
\addlegendentry{Standard training}
\addplot [thick, color0, dashed]
table {%
0 0.873
1 0.458
};
\addlegendentry{Adversarial training}
\end{axis}

\end{tikzpicture}
	\caption{Correct classification rate on CIFAR-10 test images based on the original data in \cite{madry2017towards}, as mentioned in Sec.~\ref{sec:relatedwork}. We consider the ``wide'' architecture.}
	\label{fig:Madry}
\end{figure}
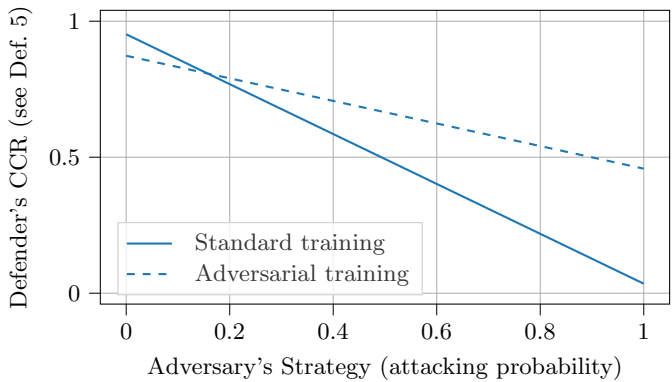
%%%%%%%%%%%%%%%%%%%%%%%%%%%%%%%%%%%%%%%
Given this drawback of all so far proposed countermeasures against adversarial examples, we raise the question if a rationally acting defender would actually deploy these countermeasures when facing a rationally acting adversary. 
We do so by proposing a game-theoretical model and a thorough analysis of a minimal instantiation.
Our contributions are as follows:
\begin{enumerate}
    \item We propose the \emph{Advanced Adversarial Classification Game} that captures all relevant properties in the competition between adversary and defender in adversarial machine learning. Our game can be instantiated with all possible defender and adversary strategies (meaning classification models on the defender's side and attack algorithms on the adversary's side).
    \item We thoroughly analyse the game and identify situations where both players play pure (or mixed) strategies.
    \item We define two new metrics, the correct classification rate (CCR) for the defender and the attack success rate (ASR) for the adversary, and show their role in the analysis of the game.
    \item By starting with a rigorous mathematical formulation of an economic model and identifying all simplifications made, we justify the sufficiency and practical importance of CCR and ASR.
\end{enumerate}

The remainder of the paper is structured as follows: In Sec.~\ref{sec:relatedwork} we review the most relevant related work before introducing our game-theoretical model in Sec.~\ref{sec:general}. We instantiate this general model in Sec.~\ref{sec:analysis} and analyze the case where both, defender and adversary have two possible strategies. We report best responses and Nash equilibria before ending with a discussion and conclusion.

\section{Related Work}\label{sec:relatedwork}
In the last couple of years, stronger and stronger attacks against machine learning classifiers were developed, e.g.,~\cite{brendel2017decision,papernot2016transferability,papernot2017practical}, 
with basically everyday new papers appearing on this subject\footnote{More than 3500 papers can be found at: \url{https://preview.tinyurl.com/yxenrc4k}}.

Naturally, with an increased interest in attacks, also an increasing number of countermeasures have been proposed. Already the seminal work that first identified the existence of adversarial examples~\cite{Szegedy13} suggested to harden the underlying convolutional neural networks (CNN) against these adversarial examples by incorporating them into the training procedure. This method is by now commonly called \emph{adversarial training}. It is one of the most prominent approaches in the research of CNNs that are robust against adversarial examples~\cite{madry2017towards}. Another approach to avert the danger of adversarial examples is to try to detect them either inside the CNN itself~\cite{Grosse17}, or by applying detection methods to every input object and sorting out adversarial examples before they even enter the neural network~\cite{Xu18}. Here, one important observation is that all countermeasures against adversarial examples proposed so far strictly decrease the accuracy on benign images. For example, adversarial training decreases clean accuracy from $95.2\%$ to $87.3\%$ in the ``wide'' model considered in~\cite{madry2017towards} and from $95.6\%$ to $90.0\%$ with the approach proposed in~\cite{zhang2019you}. Even the randomized defence mechanism proposed in~\cite{pinot2020randomization} decreases the accuracy on benign inputs from $88\%$ to $80\%$. (All results are for CIFAR-10.)

Most of the literature on attacks against machine learning classifiers and countermeasures against these assumes that an adversary will always attack and a defender will always (try to) defend, irrespective of whether it actually pays off to do so. As soon as we assume both, adversary and defender to act rationally, and only attack and defend when it pays off, we enter the realm of game theory.  
Game-theoretical analysis of adversarial machine learning dates back to 2004, when Dalvi et al. analyzed the security of a machine learning-based spam detector against a strategic adversary~\cite{dalvi2004adversarial}. Here, the spam detector is a binary classifier and the adversary creates adversarial examples by perturbing some well chosen features of legitimate emails. The simultaneous move game is solved for a Nash equilibrium~\cite{nash1951non} by formulating the problem as a constrained optimization problem and solving this with a mixed linear program. 

Following up on this, researchers have framed \emph{adversarial classification} with zero-sum~\cite{globerson2006nightmare} and non zero-sum games~\cite{dritsoula2017game} and as simultaneous move and sequential move (Stackelberg) games~\cite{bruckner2011stackelberg}. With the increasing interest of the machine learning community in deep neural networks, also the game-theoretical analysis of these machine learning frameworks~\cite{schuurmans2016deep}, as well as their security properties~\cite{grosshans2015solving,pinot2020randomization} appeared. 
%maybe mention the whole GT-inspired GAN approach (if we have space...)
Interestingly, to the best of our knowledge, there is no work which that to incorporate costs on both the adversary's and the defender's side and the decreased accuracy on benign inputs, mentioned above, and then analyzes the optimal strategies of both players.

Probably closest to our work is Gilmer et al.'s paper titled ``Motivating the Rules of the Game for Adversarial Example Research''~\cite{gilmer2018}. Besides the title, it does not deal with game theory although many of its arguments follows game-theoretical considerations, such as the question who moves first, the adversary or the defender (like in a Stackelberg game), or allowing the adversary a strategy between never attacking and always attacking (as in a mixed strategy).

We adapt the following definitions from game theory literature, e.g.,~\cite{leyton2008essentials}:

\begin{definition}[Mixed strategy]
A \textbf{mixed strategy} is a strategy, which assigns a positive probability to two or more pure strategies.
A \textbf{fully} mixed strategy assigns a positive probability to all pure strategies. 
\end{definition}

\begin{definition}[Best response]
A defender's \textbf{best response} $s^* \in \StrategySetDEF$ to an adversary's strategy $r \in \StrategySetADV$ satisfies $\Utility^\DEF\!(s^*, r) \geq \Utility^\DEF\!(s, r)$ for all $s \in \StrategySetDEF$.
Likewise, an adversary's \textbf{best response} $r^* \in \StrategySetADV$ to a defender's strategy $s\in \StrategySetDEF$ satisfies $\Utility^\ADV\!(s, r^*) \geq \Utility^\ADV\!(s, r)$ for all $r \in \StrategySetADV$.
\end{definition}
% \begin{definition}[Best response]
% Given an adversary's strategy $r \in \StrategySetADV$ a defender's \textbf{best response} is a strategy $s^* \in \StrategySetDEF$ such that $\Utility^\DEF\!(s^*, r) \geq \Utility^\DEF\!(s, r)$ for all $s \in \StrategySetDEF$.
% Likewise, given an defender' strategy $s \in \StrategySetDEF$ the adversary's \textbf{best response} is a strategy $r^* \in \StrategySetADV$ such that $\Utility^\ADV\!(s, r^*) \geq \Utility^\ADV\!(s, r)$ for all $r \in \StrategySetADV$.
% \end{definition}

\begin{definition}[Nash equilibrium]
A \textbf{Nash equilibrium} is a strategy profile ($s^*,r^*$) where both strategies are mutual best responses.
In a Nash equilibrium neither of the two actors has an incentive to unilaterally change her strategy.

%A \textbf{strict} Nash equilibrium is a strategy profile $(s^*,r^*)$ in which\linebreak ${\Utility^\DEF (s^*,r^*) > \Utility^\DEF(s,r^*)}$ for all $s \neq s^* \in \StrategySetDEF$ and\linebreak ${\Utility^\ADV(s^*,r^*) > \Utility^\ADV(s^*,r)}$ for all $r \neq r^* \in \StrategySetADV$. 

%A strategy profile $(s^*,r^*)$ is a \textbf{weak} Nash equilibrium if it is not a strict Nash equilibrium and $s^*$ and $r^*$ are mutual best responses.

%A strategy profile $(s^*,r^*)$ is a weak Nash equilibrium if, $\Utility^\DEF (s^*,r) \geq \Utility^\DEF(s,r)$ for all $s \neq s^* \in \StrategySetDEF$ and $\Utility^\ADV(s,r^*) \geq \Utility^\ADV(s,r)$ for all $r \neq r^* \in \StrategySetADV$ and it is not a strict Nash equilibrium. 

%In a weak Nash equilibrium $\Utility^\DEF (s^*,r) \geq \Utility^\DEF(s,r)$ and $\Utility^\ADV(s,r^*) \geq \Utility^\ADV(s,r)$

%In a strict Nash equilibrium the actors' best responses are unique, while in a weak Nash equilibrium multiple possible best responses exist.

\noindent A game can have zero, one, or multiple Nash equilibria.
\end{definition}

\section{The Advanced Adversarial Classification Game}%Game Model (Modelling Approach)}
\label{sec:general}

This section introduces our game model. It starts by identifying and justifying costs of both parties, namely those of the adversary (cf. Sec.~\ref{sub:costs_of_adversary}) and those of the defender (cf. Sec.~\ref{sub:costs_of_defender}). Once these costs are identified, we use them to formulate the game characterizing the interplay between the parties. As is usual in economic studies, we partition the utility into \textit{initial costs} ($\InitialCosts$) that are mandatory and independent of the use of attack (or defence), \textit{ongoing costs} ($\OngoingCosts$), which scale with how many times the attack (or defence) is used, and \textit{total reward}, $\textrm{TR},$ which is the reward and which is also assumed to be proportional to the number of usage. Hence, the utility of each actor is given by 
\begin{equation}
\label{eq:UtilityDEFGeneral}
\Utility^k = -\InitialCosts^k - \OngoingCosts^k + \textrm{TR}^k,
\end{equation}
where $k \in \{\ADV, \DEF\}.$ As expected (and used later below), ongoing costs $\OngoingCosts$ and total reward $\textrm{TR}$ can be represented by a single term for both adversary and defender. We started by denoting them separately in order to clearly communicate their origins and factors. 
    
\subsection{Adversary}    
\label{sub:costs_of_adversary}
The adversary's \textbf{action} set is defined by the attack methods she can use, her perturbation budget, and the fraction of samples she can attack. Each action has associated properties with regards to its cost structure and the attack success rate, depending on the model to be attacked (which is in turn the defender's chosen action).
    
\textbf{Initial costs} of an adversary, $\InitialCosts^\ADV$, may include gathering intelligence about the victim, stealing the targeted model, acquisition of suitable data to train a surrogate model, computational costs to train a surrogate model, hardware or software, and human capital. Notice that some  costs occur even though no attack is carried out. For example, as soon as an adversary contemplates to attack a model, time is spent to investigate the options, get information on the target, or prototype an attack.

\textbf{Ongoing costs} of an adversary, $\OngoingCosts^\ADV$,  are mainly characterized by the computational costs of the attack to calculate perturbations. They depend on the attack method(s), but other factors might contribute such as a fee charged by the model provider per prediction, if the adversary does not own the model (or has trained a surrogate model).
    
\textbf{Total rewards} of an adversary, $\textrm{TR}^\ADV$, are the rewards the adversary obtains for attacking the defender's classifier. They can be positive when the target is successfully deceived, or negative for failed attacks. While often close to zero, negative rewards are conceivable, if every failed attack allows the defender to learn something about the adversary's strategies which helps them to detect the adversary more successfully.% The positive reward has to be sufficiently large to incentivize the adversary to play.

\subsection{Defender}
\label{sub:costs_of_defender}
The defender has many \textbf{actions} to choose from, where each action is a combination of choices with regard to the architecture of the model, the data used to train the model, the training algorithm itself which can already implement defence mechanisms such as adversarial training~\cite{madry2017towards}, and the inference mode~\cite{cohen2019certified}. Each unique combination of the above leads to a model, which is considered to be a pure strategy. Hence, in the context of the defender, the terms model and action are used interchangeably. Each of them has associated properties with regard to its cost structure, its accuracy on clean inputs, and the robust accuracy on adversarial examples crafted with a given attack method and strength, where the last two are adversary's actions. 

The \textbf{initial costs} of the defender, $\InitialCosts^\DEF,$ include, but are not limited to, gathering and labeling training data, training the model (computational costs) and human capital, e.g. hiring an expert to instantiate the classification pipeline. Therefore some of them, such as the data acquisition, are mostly independent of the defender's strategy, while others like the number of trained models or the complexity of the training procedure clearly depend on them.
  
\textbf{Ongoing costs} of the defender, $\OngoingCosts^\DEF,$ occur constantly per classified input. They mainly correspond to the computational costs for inference of a specific model. Ongoing costs for models in the same complexity class can be considered equivalent, but some defences~\cite{salman2019provably} recommend classifying each sample many times (subjected to some randomness), which increases ongoing costs by orders of magnitude.
   
\textbf{Total reward} of the defender. A rational defender will only train and deploy the model when she can draw positive reward, $\textrm{TR}^\DEF\!,$ (not necessarily monetary) from its operation. As mentioned above, we assume these rewards to be described on a per-sample basis, specifically for correct outputs only. Further, it is reasonable to expect a negative reward, i.e. some penalty term, when the model's output differs from the ground truth. The extent of this penalty might differ from sample to sample and might be smaller for benign samples, but larger for samples manipulated by the adversary. 

Following the above considerations, we model the game as a non-zero-sum game as both actions are clearly interdependent and one actor's positive reward is not necessarily equal to the other actor's negative reward. We further choose a simultaneous move game in which both actors decide on their strategy at the same time without having certainty about the opponent's chosen strategy.

\subsection{Cost of pure strategies}
\label{sec:formalization}
The defender classifies a finite number of samples $n\in\mathbb{N}$, out of which a fraction $\rmax\in[0,1]$ is under the control of the adversary, which means she can attack them by any method of her choice.\footnote{Note that this is already a simplification, since in practice none of the parties knows how many samples the adversary can influence.} The adversary may use up to $M-1$ different attacks (we refer to the $j$-th attack as attack $j$) and may also choose to leave a sample untouched (not to attack), which is denoted as attack $M$. 
The defender may choose one of $N$ different models to classify samples (we refer to the $i$-th model as model $i$). The accuracy of the $i$-th model is given by $\acc_{i}$ and its robustness against the $j$-th attack is given by $\rob_{ij}$. 
%Throughout this section we consistently use the index $i$ for models and the index $j$ for attacks.

For expressing the costs, it will be useful to introduce two metrics: Attack success rate (ASR) and Correct classification rate (CCR). The first quantifies how successful the $j$-th attack is against the $i$-th classifier, while the second quantifies the probability of correctly classifying samples with model $i$, taking into account that samples might be attacked by attack $j$ with probability $\rho.$ Though both quantities seem to be similar at first glance, we have shown below that they allow to compactly represent the strategies and play a pivotal role in the analysis of the game in Sec.~\ref{sec:analysis}.
%\subsubsection{ASR}
\begin{definition}[Attack Success Rate]\label{def:ASRij}
We define the attack success rate $\ASR$ of attack $j$ against model $i$ as 
\begin{equation}\label{eq:ASRij}
	\ASR_{ij} = 1 - \rob_{ij}.
\end{equation}
\end{definition}

\begin{definition}[Correct Classification Rate]\label{def:CCRij}
We define the correct classification $\CCR$ rate of model $i$, where only a fraction $\rho\in[0,\rmax]$ of all samples are perturbed by the $j$-th adversarial attack.
\begin{equation}\label{eq:CCRij}
	\CCR_{ij}(\rho) =  (1-\rho)\acc_i + \rho\ \rob_{ij}
\end{equation}
\end{definition}
% This measures the probability of correctly classifying a sample. In particular, if the adversary chooses attack $j<M$, the probability of correctly classifying a sample is given by $\CCR_{ij}(\rmax)$.
Figure \ref{fig:Madry} shows the correct classification rate for for the ``wide'' architecture in \cite{madry2017towards} with $\rmax=1$.
The lines of the two models intersect at $\approx 17\%$, clearly indicating that the defender should use the undefended model if the proportion of adversarial examples is below, and the defended model when the proportion is above.
%Notice that both quantities, $\ASR$ and $\CCR$ incorporate accuracy or robustness values. Unfortunately, not all papers proposing adversarial attacks or defenses against them report these values, therefore not allowing for a complete evaluation.
%, therefore we deem it important that all papers proposing defences or attacks against adversarial attacks report these.

%%%%%%%%%%%%%%%%%%%%%%%%%%%%%%%%%%%
%%%%%%%%%%%%% Axioms %%%%%%%%%%%%%%
%%%%%%%%%%%%%%%%%%%%%%%%%%%%%%%%%%%

%%% ADVERSARY %%%
\subsubsection{Adversary}
Below, we state and discuss our assumptions on the economic factors of the adversary which are based on adversarial machine learning literature. %The most controversial is probably the first one, while the rest is more or less standard.
\begin{axioms}{A}
    \item \label{item:A1} The initial costs $\InitialCosts^\ADV\ge0$ are constant and non-negative. This corresponds to a case where all attacks are similar in nature. That is, all attacks require a similar level of intelligence on the victim's model, use the same surrogate model (if applicable) and demand the same hardware, software and development costs. Moreover, this assumes that the adversary has to instantiate each attack to assess its quality. This incurs the initial costs, even if the adversary does not create a single adversarial example.
    \item \label{item:A2}Generating an adversarial example with attack $j$, where $j<M$, inflicts constant ongoing costs $\OngoingCosts_j^\ADV\ge0$. 
    \item \label{item:A3}Successful attacks yield revenue $R_+^\ADV\ge0$, while unsuccessful attacks cost $R_-^\ADV\ge0$. 
    \item \label{item:A4}Attack $j<M$ against model $i$ succeeds with probability $\ASR_{ij} = 1 - \rob_{ij}$ and fails with probability $1 - \ASR_{ij} = \rob_{ij}$. 
    \item \label{item:A5}Attack $M$ corresponds to refraining from attacking. No adversarial examples are generated, ongoing costs $\OngoingCosts_M^\ADV = 0$ are assumed to be equal to zero, though we assume an adversary has to pay the initial costs. Furthermore, $\ASR_{iM}$ is undefined and $\CCR_{iM} \equiv \acc_i$ for all models $i$.
    \end{axioms}
With the above assumptions, we define the expected payoff (revenue) per sample (EPPS) of the adversary when using the $j$-th attack against the $i$-th model as: 
\begin{align}
    \EPPS_{ij}^\ADV 
    &= -\OngoingCosts_j^\ADV - R_-^\ADV(1-\ASR_{ij}) + R_+^\ADV\ASR_{ij}
    \enspace,
\end{align}
where the minus in front of $R_-^\ADV(1-\ASR_{ij})$ indicates that adversary has to pay for failed attacks (or cannot get revenue, if $R_-^\ADV$ is equal to zero). The total revenue (utility) of the adversary after perturbing $n\rmax$ samples is 
\begin{equation}\label{eq:UtilityADV}
    U_{ij}^\ADV = 
        \begin{cases} 
            -\InitialCosts^\ADV + n\rmax\EPPS_{ij}^\ADV & \text{if $j<M$}\\
            -\InitialCosts^\ADV & \text{if $j=M$}
        \end{cases}\enspace,
\end{equation}
where we have used assumption~\eqref{item:A1} (initial costs are not influenced by strategy of any party) and~\eqref{item:A5} (refraining from attacking does not cost anything).

%%% DEFENDER %%%
\subsubsection{Defender} 
Next, we state and discuss our assumptions on the economic factors of the defender, again based on the literature on adversarial machine learning.
\begin{axioms}{D}
    \item \label{item:D1} The defender has to classify all $n\in\mathbb{N}$ samples. This models a situation, where the defender cannot distinguish between any sample a priori and therefore does not exclude any of them. Note that adversarial examples are indistinguishable from benign examples by definition.   
    \item \label{item:D2} The initial costs $\InitialCosts^\DEF\ge0$ are constant and do not depend on the defender's choice of model. This covers all cases, where the costs for training each model are either the same or are dominated by the costs for setting up the training pipeline. 
    \item \label{item:D3} Classifying a sample with the $i$-th model inflicts ongoing costs $\OngoingCosts_i^\DEF>0$, which only depend on the model itself. 
    \item \label{item:D4} Defender rewards only depend on the correct classification rate. That is, correctly classifying a sample yields $R_+^\DEF\ge0$, while misclassifications incurs cost $R_-^\DEF\ge0$. In particular, errors on benign samples are as expensive as errors on adversarial examples.
\end{axioms}
Based on these assumptions we construct the utilities of the defender for the $i$-th model and the $j$-th attack. Remember that the adversary perturbs at most a fraction $\rmax\in[0,1]$ of all samples. By choosing any attack $j<M$ the adversary generates $n\rmax$ adversarial examples. The expected payoff (revenue) per classified sample (EPPS) for the defender is given by 
\begin{equation}\label{eq:EPPSij}
    \EPPS_{ij}^\DEF = 
        \begin{cases} 
            -\OngoingCosts_{i}^\DEF\! - R_-^\DEF\!(1-\CCR_{ij}(\rmax)) + R_+^\DEF\CCR_{ij}(\rmax) & \text{if $j<M$}\\
            -\OngoingCosts_{i}^\DEF\! - R_-^\DEF\!(1-\acc_i) + R_+^\DEF\acc_i & \text{if $j=M$}
        \end{cases}\enspace,
\end{equation}
where we have used assumption \eqref{item:D3} (classifying a sample incurs constant costs) and~\eqref{item:D4} (reward depends only on correct classification rate). By assumption~\eqref{item:A5} no adversarial examples are generated if $j=M$. By further incorporating assumption~\eqref{item:D1} (defender has to classify all samples) and \eqref{item:D2} (initial costs are constant), we get the defender's utility 
\begin{equation}\label{eq:UtilityDEF}
    \mathbf{U}_{ij}^\DEF = -\InitialCosts^\DEF + n \ \EPPS_{ij}^\DEF \enspace.
\end{equation}\noindent
%%%%%%%%%%%%%%%%%%%%%%%%%%%%%%%%%%%%%%%
%%%%% Overview of Game parameters %%%%%
%%%%%%%%%%%%%%%%%%%%%%%%%%%%%%%%%%%%%%%
\begin{table}[t]
	\centering
	\caption{Overview of Game parameters}
	\resizebox{\textwidth}{!}{
	\begin{tabular}{lll@{~~~}ll}
		\toprule
		\multirow{5}{*}{Description} & 
		\multicolumn{2}{c}{Total number of samples} & \multicolumn{2}{c}{Fraction of samples that can be} \\
		& 
		\multicolumn{2}{c}{to classify $n \in \mathbb{N}$} & \multicolumn{2}{c}{adversarially perturbed $\rmax\in[0,1]$} \\[1ex]
		& \multicolumn{2}{c}{Accuracy values of all models} & \multicolumn{2}{c}{Robustness values of all models} \\
		& \multicolumn{2}{c}{$\acc\in[0,1]^N$} & \multicolumn{2}{c}{ $\rob\in[0,1]^{N\times (M-1)}$} \\
		\cmidrule(r){2-5}
		& \multicolumn{2}{c}{Defender's side} 				& \multicolumn{2}{c}{Adversary's side} \\ 
		\cmidrule(r){2-3}\cmidrule(r){4-5}
		Positive reward 	& correct classification &  $R_+^\DEF\in[0,\infty)$ & successful attack & $R_+^\ADV\in[0,\infty)$ \\
		Negative reward	 	& misclassification      &  $R_-^\DEF\in[0,\infty)$ & failed attack     & $R_-^\ADV\in[0,\infty)$ \\
		Initial Costs       & 						 &  ~$\InitialCosts^\DEF\in[0,\infty)$ &			& ~$\InitialCosts^\ADV\in[0,\infty)$ \\
		Ongoing costs 		&						 &  $\OngoingCosts^\DEF\in[0,\infty)^N$&		& $\OngoingCosts^\ADV\in[0,\infty)^{M-1}$ \\
		\cmidrule(r){2-3}\cmidrule(r){4-5}
		Choice Parameter    &                  		 &  $s\in\StrategySetDEF$	& 					& $r\in\StrategySetADV$ \\ 
		Performance  & \multirow{2}{*}{correct classification rate} & \multirow{2}{*}{$\CCR(r)^\dagger$}	& \multirow{2}{*}{attack success rate}& \multirow{2}{*}{$\ASR(s)^\dagger$} \\
		Measure		 &&&&\\
		Expected Payoff & \multirow{2}{*}{classified} &\multirow{2}{*}{$\EPPS^\DEF\!(r)^\dagger$}		& \multirow{2}{*}{adversarially perturbed} &\multirow{2}{*}{$\EPPS^\ADV\!(s)^\dagger$} \\
		per Sample &&&&\\
		\bottomrule
		\multicolumn{3}{l}{\scriptsize $~^\dagger:$ Note that these entries depend on the opponent's choice}
	\end{tabular}
	}
	\label{tab:GameInstantiation}
\end{table}\noindent
%%%%%%%%%%%%%%%%%%%%%%%%%%%%%%%%%%
%%%%%%% Utilitiy functions %%%%%%%
%%%%%%%%%%%%%%%%%%%%%%%%%%%%%%%%%%
\subsection{Utility of mixed strategies}
Let $\StrategySetADV = \Delta^M$ and $ \StrategySetDEF = \Delta^N$ denote the strategy space\footnote{$\Delta^d = \left\{v\in[0,1]^d \ \colon v_1 + \dots + v_d = 1 \right \}$ is the $d-1$ dimensional probability simplex.} of the adversary and defender, respectively. Each strategy $r \in \StrategySetADV$ / $s \in \StrategySetDEF$ corresponds to a probability distribution, where the adversary / defender chooses action $j$ / $i$ with probability $r_j$ / $s_i$, respectively.

The adversary's and defender's utility function are given by
\begin{align*}
    \Utility^\ADV\!(s,r) &= \sum_{i=1}^N\sum_{j=1}^M s_i \mathbf{U}_{ij}^\ADV r_j = s^T \mathbf{U}^\ADV r \\
    \Utility^\DEF\!(s,r) &= \sum_{i=1}^N\sum_{j=1}^M s_i \mathbf{U}_{ij}^\DEF r_j = s^T \mathbf{U}^\DEF r \enspace,
\end{align*}
respectively. For the analysis of the game, it is convenient to represent the utility functions as  %The utilities of both actors above can be expressed as a bilinear form, which is caused by assumptions \eqref{item:A1} and \eqref{item:D2}, stating that the initial costs are constant. If not, for utilities holds
\begin{align}
    \Utility^\ADV\!(s,r) &= -\InitialCosts^\ADV + n\ \rmax \ r^T \ \EPPS^\ADV\!(s)\label{eq:UtilityADV_with_EPPS} \\
    \Utility^\DEF\!(s,r) &= -\InitialCosts^\DEF + n\ s^T \ \EPPS^\DEF\!(r)\label{eq:UtilityDEF_with_EPPS}  \enspace,
\end{align}
with the expected payoff per classified sample depending only their respective opponent's strategy $\EPPS^\ADV\!(s)\defeq(\EPPS^\ADV\!\ s)^T$ and $\EPPS^\DEF\!(r)\defeq\EPPS^\DEF\!\ r$. 

% $\Utility^\ADV = \InitialCosts^\ADV(s) + n s^T \EPPS^{\ADV} r^T$ and $\Utility^\DEF = \InitialCosts^\DEF(r) +  n s^T \EPPS^{\DEF} r^T.$
%If initial cost are negligible in comparison to revenue gained from expected payout, i.e. $ \InitialCosts^{\cdot} \ll n s^T \EPPS^\cdot r^T,$ then the utilities can be again simplified to the bilinear form  $\Utility^\ADV = n \rmax s^T \EPPS^{\ADV} r^T$ and $\Utility^\DEF =  n s^T \EPPS^{\DEF} r^T.$ The advantage of the bilinear form is that it simplifies reasoning about the game, as shown shortly below.
% The pure strategies of the adversary and defender are given by the standard unit vectors in $\mathbb{R}^M$ and $\mathbb{R}^N$, respectively. Let $e_i^\DEF$ be the $i$-th standard unit vector in $\mathbb{R}^N$ and $e_j^\DEF$ be the $j$-th standard unit vector in $\mathbb{R}^M$. Then $\Utility^\ADV\!(e_i^\DEF,e_j^\ADV)=U_{ij}^\ADV$ and $\Utility^\DEF\!(e_i^\DEF,e_j^\ADV)=U_{ij}^\DEF$.\\

By this, the advanced adversarial classification game is fully described as a non-zero-sum normal form game with the adversary and defender as players, pure strategy sets $\{e_1^\DEF,\dots, e_N^\DEF\}$ and $\{e_1^\ADV,\dots, e_M^\ADV\}$, and utility functions $\Utility^\ADV\!$ and $\Utility^\DEF\!$, where $e_i^\DEF\in\mathbb{R}^N$ and $e_j^\ADV\in\mathbb{R}^M$ are the $i$-th and $j$-th standard unit vectors, respectively. The mixed strategy sets are given by $\StrategySetADV$ and $\StrategySetDEF$. All parameters of the game are shown in  Table~\ref{tab:GameInstantiation}. 

%The positive and negative rewards for each successful and failed attack are given by $R_+^\ADV\in\mathbb{R}$ and $R_-^\ADV\mathbb{R}$ respectively by Assumption \eqref{item:D4}. The adversary has to pay one-time costs for classifying a

\subsection{Expected payouts for mixed strategies}
In this subsection we introduce two performance measures for both players which depend only on non-choice parameters and their respective opponent's strategy. These performance measures are generalizations of the attack success rate (\ASR) and correct classification rate (\CCR) in Definitions~\ref{def:ASRij} and \ref{def:CCRij}.
%\subsubsection{ASR (Mixed strategy)}
\begin{definition}[ASR for mixed strategies]
Given a defender's strategy $s\in\StrategySetDEF$, we define the attack success rate of attack $j$ as
\[\ASR_j(s) \defeq s^T (\vec{1} -\rob_{\cdot j})  = 1-\sum_{i=1}^N s_i\rob_{ij} \enspace. \]
\end{definition}

This is a direct extension of the attack success rate from  Definition~\ref{def:ASRij} to mixed strategies, since $\ASR_j(e_i^\DEF)=\ASR_{ij}$. As the name implies, it quantifies the probability of a successful attack given a possibly mixed defender's strategy.

%\subsubsection{CCR (Mixed strategy)}
%\emph{Needs better explanations!}\\

%We collect the entries of $\CCR_{ij}(\rmax)$ as a vector
%\[\CCR_i(\rmax) \defeq \left(\CCR_{ij}(\rmax)\right)_{1\le j\le M}\enspace,\]
% for all models $1\le i\le N$. 
\begin{definition}[CCR for mixed strategies]\label{def:CCR}
We define the correct classification rate with respect to an adversary's strategy $r\in\StrategySetADV$ against model $i$ as
\[\CCR_{i}(r;\rmax) \defeq r^T  \left(\CCR_{ij}(\rmax)\right)_{1\le j\le M} = \sum_{j=1}^M r_j\CCR_{ij}(\rmax)\enspace,
\]
where $\CCR_{iM}\equiv \acc_i$ by assumption \eqref{item:A5}. 
\end{definition}

Similar to above, this generalizes the correct classification rate from Definition~\ref{def:CCRij} to mixed strategies. Unlike the above, we place an additional restriction on the performance metric. Here, the proportion of adversarial examples is explicitly restricted to $\rmax\in[0,1]$. For pure strategies of the adversary we get $\CCR_i(e_j^\ADV;\rmax)=\CCR_{ij}(\rmax)$, the probability of correctly classifying a sample if the adversary chooses attack $j$.

\subsubsection{Expected payoff per sample} 
%Each entry of the vector-valued $\EPPS$-functions in Equation~\eqref{eq:UtilityADV_with_EPPS} and \eqref{eq:UtilityDEF_with_EPPS} can be concisely expressed as 
Both performance measures allow us to concisely describe each entry of the $\EPPS$-functions in Equation~\eqref{eq:UtilityADV_with_EPPS} and \eqref{eq:UtilityDEF_with_EPPS} 
\begin{align} \nonumber
    \EPPS_j^\ADV\!(s) &= -\OngoingCosts_j^\ADV - R_-^\ADV(1-\ASR_j(s)) + R_+^\ADV\ASR_j(s)  \\
    &= -\OngoingCosts_j^\ADV - R_-^\ADV - (R_+^\ADV+R_-^\ADV)\ \ASR_j(s) \label{eq:EPPSadv}\\ \nonumber
    \EPPS_i^\DEF\!(r) &= -\OngoingCosts_i^\DEF - R_-^\DEF(1-\CCR_i(r;\rmax)) + R_+^\DEF\CCR_i(r;\rmax) \\
    &= -\OngoingCosts_i^\DEF - R_-^\DEF + (R_+^\DEF+R_-^\DEF)\ \CCR_i(r;\rmax)\enspace. \label{eq:EPPSdef}
\end{align} 
$\EPPS_j^\ADV\!(s)$ corresponds to the expected payoff per adversarially perturbed sample for attack $j$ given the strategy $s\in\StrategySetDEF$. Similarly, $\EPPS_i^\DEF\!(r)$ corresponds to the expected payoff per classified sample given model $i$ and strategy $r\in\StrategySetADV$.

% \noindent 
% Let 
% \begin{align*}
%     \EPPS^\ADV(s)&=\left(\EPPS_j^\ADV(s)\right)_{1\le j\le M} \\
%     \EPPS^\DEF(r)&=\left(\EPPS_i^\ADV(r)\right)_{1\le i\le N}\enspace
% \end{align*}
% with $\EPPS_M^\ADV(s)\equiv 0$. The utility functions for adversary and defender are then:
% \begin{align}
%     \Utility^\ADV\!(s,r) &= -\InitialCosts^\ADV + n\ \rmax \ r \ \EPPS^\ADV\!(s)\label{eq:UtilityADV_with_EPPS} \\
%     \Utility^\DEF\!(s,r) &= -\InitialCosts^\DEF + n\ s \ \EPPS^\DEF\!(r)\label{eq:UtilityDEF_with_EPPS} 
% \end{align}

\section{Game Instantiation and Analysis}
\label{sec:analysis}
%%% A different take on the section below.
Below, we analyse the simplest realization of the game, where $N=2$ and $M=2$. The defender chooses from two classifiers (typically one of them trained as usual and the other using some form of adversarial training increasing robustness) and the adversary may attack or not. Hence their strategy spaces can be specified by a single scalar parameter $r = (r_1, 1-r_1)$ and $s = (s_1, 1-s_1)$ with $r_1, s_1\in[0,1]$ for the adversary and defender, respectively. By this, ``not attacking'', i.e., attack $M=2$ is represented by the second entry ($1-r_1$). We further simplify the notation by writing $\rob_{i}$ instead of $\rob_{i1}$ and $\ASR(s)$ instead of $\ASR_1(s)$.

As a convention, motivated by experimental results of state of the art methods for adversarial training, we assume the first model (trained normally) to have higher accuracy on benign samples but lower on adversarial samples (robustness) compared to the second model. Furthermore, accuracy on attacked samples of both models is assumed to be lower than that on clean samples, since the adversary aims to cause a misclassification and not to help the defender. This means the accuracy and robustness are ordered as 
\begin{equation}
    \acc_1 > \acc_2 > \rob_2 > \rob_1 \enspace.
    \label{eq:ordering}
\end{equation}
This order also makes sense from a game-theoretical point of view. If the first model would have both, higher accuracy and higher robustness, choosing this model would be a strictly dominant strategy and the defender will never use the second model, assuming equal ongoing costs as has been justified above. Contrary, if the first inequality would not hold, choosing model 2 would be strictly dominant. 
In order to simplify the equations in this section, we define
\begin{align*}
    \Dacc &\defeq \acc_1 - \acc_2 > 0\\
    \Drob &\defeq \rob_2 - \rob_1 > 0
\end{align*}
%%%%%%%%%%%%%%%%%%%%%%%%%%%%%%%%%%
%%%%% BEST RESPONSE ANALYSIS %%%%%
%%%%%%%%%%%%%%%%%%%%%%%%%%%%%%%%%%

\subsection{Best response analysis of the adversary}\label{sec:BRAadv}
Assuming a fixed strategy of the defender $s\in\StrategySetDEF$, a best response of the adversary maximizes the utility $r^* \in \arg\max_{r\in\StrategySetADV}\Utility^\ADV\!(s,r)$ defined in Equation~\eqref{eq:UtilityADV_with_EPPS}. Since the initial costs $\InitialCosts^\ADV$ are constant (by assumption~\eqref{item:A1}), the utility linearly increases in $r$, hence the maximization is trivial depending on the sign of her expected payout per sample, $\EPPS^\ADV\!(s),$ as follows:
\begin{equation}\label{eq:casesEPPS_ADV}
r_1^* \in \begin{cases}
\{0\} & \text{ iff } \EPPS^\ADV\!(s) < 0\\
\{1\} & \text{ iff } \EPPS^\ADV\!(s) > 0\\
[0,1] & \text{ iff } \EPPS^\ADV\!(s) = 0
\end{cases}
\enspace.
\end{equation}
The last line means that any $r^*\in\StrategySetADV$ is a best response. We refer to these cases as Case 1 (never attack), Case 2 (always attack), and Case 3 (indifferent) in that order. The definition of $\EPPS^\ADV\!$ from Equation~\eqref{eq:EPPSadv} can be reformulated as
\begin{equation*}
\EPPS^\ADV\!(s) = \left(R_+^\ADV + R_-^\ADV\right)(\ASR(s) - \mu^\ADV)\enspace,
\end{equation*}
where
\begin{equation*}
\mu^\ADV \defeq \frac{\OngoingCosts^\ADV + R_-^\ADV}{R_+^\ADV + R_-^\ADV}\enspace,
\end{equation*}
which leads to an alternative characterization of the best responses from Equation~\eqref{eq:casesEPPS_ADV}
\begin{equation}\label{eq:casesASR}
r_1^* \in \begin{cases}
\{0\} & \text{ iff }  \ASR(s) < \mu^\ADV\\
\{1\} & \text{ iff }  \ASR(s) > \mu^\ADV\\
[0,1] & \text{ iff }  \ASR(s) = \mu^\ADV
\end{cases} \enspace.
\end{equation}
The last Equation~\eqref{eq:casesASR} relates the attack success rate $\ASR$ to the economic factors of adversary rewards, as $\mu^\ADV$ is defined in terms of her rewards and ongoing costs. 
%Assuming ongoing costs $\OngoingCosts^\ADV$ to be negligible, which is reasonable, and so is the cost of failed attack $R^\ADV_-$ (presently, most crimes of this type are left unpunished due to lack of legislation, law enforcement, and forensic tools),  $\mu^\ADV$ is in practice going to be close to zero. 
Assuming the penalty for a failed attack $R^\ADV_-$ is negligible (presently, most crimes of this type are left unpunished due to lack of legislation, law enforcement, and forensic tools) and the reward $R^\ADV_+$ dominates the ongoing costs, $\mu^\ADV$ is in practice going to be close to zero.
%Moreover, since the potential reward for a successful attack, $R^\ADV_+$,  is in the denominator, $\mu^\ADV$ goes quickly to zero as $R^\ADV_+$ increases. 
In consequence this means that if there is a slight chance of an attack to succeed, every rational adversary will always attack. 

%%%%%%%%%%%%%%
%Economic aspect puts threshold on the robustness.
%%%%%%%%%%%%%%

\begin{table}[t]
	\centering
	\caption{All possible best responses $r^*=(r_1^*,1-r_1^*)\in\StrategySetADV$ of the adversary depending on the defender's strategy $s=(s_1,1-s_1)\in\StrategySetDEF$. The first, second and third row summarize Case 1, 2, and 3, respectively. If the precondition does not hold, then the condition on $s\in\StrategySetDEF$ cannot be satisfied.}
	\begin{tabular}{@{~~~}c@{~~~}l@{~~~~~~~}c@{~~~~~}r}
		\toprule
		Case & Best responses  &                     Condition                     &                                         Precondition \\ \midrule
                     1               & $r^*=(0,1)$           & $\ASR(s) < \mu^\ADV$ & $\hphantom{\rob_1<{}}1-\mu^\ADV<\rob_2$ \\
                     2               & $r^*=(1,0)$       & $\ASR(s) > \mu^\ADV$ & $\rob_1<1-\mu^\ADV\hphantom{{}<\rob_2}$ \\
                     3               & $r^*\in\StrategySetADV$ & $\ASR(s) = \mu^\ADV$ &           $\rob_1\le1-\mu^\ADV\le\rob_2$ \\ \bottomrule
	\end{tabular}
	\label{tab:AdversaryBestResponse}
\end{table}
We summarize our results in Table~\ref{tab:AdversaryBestResponse}. We specify the case on the left-most column. The best responses for the given case are shown in the second column. For each case, we have a corresponding equivalent condition on the $\ASR(s)$, which is shown in the third column. The fourth and last column shows for each case if it is satisfiable at all by any $s\in\StrategySetDEF$. These preconditions depend only on non-choice parameters in Table~\ref{tab:GameInstantiation} % and \ref{tab:DefenderInstantiation} 
and not on the defender's strategy itself.

\subsection{Best response analysis of the defender}\label{sec:BRAdef}
For a given adversary strategy $r\in\StrategySetADV$, a best response of the defender maximizes the utility $s^*\in\arg\max_{s\in\StrategySetDEF}\Utility^\DEF\!(s,r)$, see Equation~\eqref{eq:UtilityDEF_with_EPPS}. This is equivalent to maximizing the expected payout per sample $s^T \EPPS^\DEF\!(r)$ as defined in Equation~\eqref{eq:EPPSdef}, which we rewrite as 
\begin{equation}\label{eq:EPPSdef_frac}
    \EPPS_i^\DEF(r) = \left(R_+^\DEF+R_-^\DEF\right)\left(\CCR_i(r;\rmax) - \mu_i^\DEF\right)\enspace,
\end{equation}
where 
\[\CCR_i(r;\rmax) = (1-r_1 \rmax)\acc_i + r_1\rmax\ \rob_i\]
and
\[\mu_i^\DEF = \frac{\OngoingCosts_i^\DEF + R_-^\DEF}{R_+^\DEF + R_-^\DEF}\enspace.\] 

We observe that similarly to the adversary's case, the defender chooses a model $i$ with maximal $\CCR_i(r;\rmax) - \mu_i^\DEF$. Here again the performance metric $\CCR_i(r;\rmax)$ is related to the economic setting term $\mu_i^\DEF$. Importantly and unlike the adversary's case, the choice of the defender's strategy depends on $\rmax$, the maximal fraction of samples the adversary can influence. As will be seen below, this means that for some small $\rmax$ (which occurs in many practical situations), the defender will not have an incentive to use the robust classifier.\\

For the analysis of Nash equilibria in the next section, it is useful to analyze the difference $\EPPS_1^\DEF - \EPPS_2^\DEF$, %We introduce $\DCCR$ and $\Delta\mu^\DEF$
\begin{equation}\label{eq:EPPSdiff}
\DCCR - \Delta\mu^\DEF = \frac{\EPPS_1^\DEF\!(r)-\EPPS_2^\DEF\!(r)}{R_+^\DEF+R_-^\DEF}
\enspace,
\end{equation}
where
\begin{align*}
\DCCR &\defeq \CCR_1(r;\rmax)-\CCR_2(r;\rmax),\text{ and}\\
\Delta\mu^\DEF &\defeq \mu_1^\DEF - \mu_2^\DEF =  \frac{\OngoingCosts_1^\DEF-\OngoingCosts_2^\DEF}{R_+^\DEF + R_-^\DEF}\enspace.
\end{align*}
Notice that in many practical situations, the difference in ongoing costs of two classifiers is almost zero (non-robust and robust versions differ mainly in training, not in the architecture of the model, which makes inference costs akin), therefore $\Delta\mu^\DEF\approx 0$. 
The best responses for the defender are given by
\begin{equation}\label{eq:casesCCR}
s_1^* \in \begin{cases}
\{0\} & \text{ iff }  \DCCR < \Delta\mu^\DEF\\
\{1\} & \text{ iff }  \DCCR > \Delta\mu^\DEF\\
[0,1] & \text{ iff }  \DCCR = \Delta\mu^\DEF
\end{cases} \enspace.
\end{equation}
The last line means that any $s^*\in\StrategySetDEF$ is a best response. We refer to these cases as Case A (always defend), Case B (never defend), and Case C (indifferent) in that order.

%Notice that the (scaled) difference in ongoing cost of both models determine the strategy depending on the correct classification rate. In many applications, e.g. when comparing a model with its adversarially trained counterparts, the cost difference is zero. That is $\Delta\mu^\DEF \approx 0$, which means that Case C occurs iff $\DCCR= 0,$ which is a consequence of the utility being a bilinear form.

% Equation~\eqref{eq:casesEPPS_DEF} as follows
% \begin{equation}\label{eq:casesCCR}
% s_1^* \in \begin{cases}
% \{0\} & \text{ iff }  \DCCR < \Delta\mu^\DEF\\
% \{1\} & \text{ iff }  \DCCR > \Delta\mu^\DEF\\
% [0,1] & \text{ iff }  \DCCR = \Delta\mu^\DEF
% \end{cases} \enspace.
% \end{equation}

Defender's best responses are summarized in Table~\ref{tab:DefenderBestResponse}. The case is specified in the left-most column, the best responses for the given case are shown in the second column, and the corresponding equivalent condition on $r$ is shown in the third column. The fourth and last column shows prerequisites for the given case that only depend on non-choice parameters.

\begin{table}[t]
	\centering
	\caption{All possible best responses $s^*=(s_1^*,1-s_1^*)\in\StrategySetDEF$ of the defender depending on the adversary's strategy $r=(r_1,1-r_1)\in\StrategySetADV$. The first, second and third row summarize Case A, B, and C, respectively. If the precondition does not hold true, then the condition on $r\in\StrategySetADV$ cannot be satisfied.}
	\begin{tabular}{@{~~~}c@{~~~}l@{~~~~~~~}cr}
		\toprule
		Case & Best responses &   Condition    &                                  Precondition \\ \midrule
		             A               & $s^*=(0,1)$      & $\DCCR < \Delta\mu^\DEF$ & \phantom{as}$\phantom{0<{}}\precondDEF<\rmax$ \\
		             B               & $s^*=(1,0)$      & $\DCCR > \Delta\mu^\DEF$ & \phantom{as}$\Delta\mu^\DEF<\Dacc$\\%\precondDEF\phantom{{}<\rmax}$ \\
		             C               & $s^*\in\StrategySetDEF$ & $\DCCR = \Delta\mu^\DEF$ & \phantom{as}$0\le\precondDEF\le\rmax$ \\ \bottomrule
	\end{tabular}
	\label{tab:DefenderBestResponse}
\end{table}

\subsection{(Fully) Mixed Nash equilibria} \label{sec:mixednash}
In Subsection \ref{sec:BRAadv} and \ref{sec:BRAdef} we discussed and listed all possible best responses for both adversary and defender. The results are summarized in Table \ref{tab:AdversaryBestResponse} and \ref{tab:DefenderBestResponse}. Now we investigate if and when mixed strategy Nash equilibria exist at all. For this we consider Case 3 and Case C, that is
\[\ASR(s) = \mu^\ADV ~~\text{and}~~ \DCCR = \Delta\mu^\DEF\]
or, equivalently,
\begin{equation}\label{eq:explicit_s1_r1}
    s_1 = \condADV ~~\text{and}~~ r_1\rmax = \condDEF,
\end{equation}
respectively. Obviously, $s\in\StrategySetDEF$ and $r\in\StrategySetADV$ are mixed strategies, and therefore  $s_1,r_1\in(0,1)$, if and only if 
%\begin{align*}
%    \mu^\ADV&\in\{1-\rob_1, 1-\rob_2\}\\
%    \mu^\DEF&\in\{\Dacc, \Dacc + \rmax(\Dacc + \Drob)\}\enspace.
%\end{align*}
\begin{equation}\label{eq:fullymixedcond}
    \rob_1<1-\mu^\ADV<\rob_2 ~~\text{and}~~0<\condDEF<\rmax\enspace,
\end{equation} 
respectively.

\newtheorem{thm}{Theorem}
\begin{thm}
Let $\rob_1<1-\mu^\ADV<\rob_2$ and $0<\precondDEF<\rmax$. Then the fully mixed strategy Nash equilibrium $(s^*,r^*)$ given by
\begin{equation}\label{eq:mixednash}
s_1^* = \condADV ~~\text{and}~~ r_1^*\rmax= \condDEF\enspace.
\end{equation}
is unique.
\end{thm}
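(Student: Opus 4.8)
The plan is to reduce the statement to the best-response analysis already encoded in Tables~\ref{tab:AdversaryBestResponse} and~\ref{tab:DefenderBestResponse}, so that uniqueness follows from solving two decoupled linear equations rather than from any fixed-point argument. First I would note that in a \emph{fully} mixed equilibrium both $s_1,r_1\in(0,1)$, so each player is playing a strictly mixed best response. By the defender's best-response cases in~\eqref{eq:casesCCR}, a strictly mixed $s_1^*$ is optimal only in Case~C, i.e. when $\DCCR=\Delta\mu^\DEF$; symmetrically, by~\eqref{eq:casesASR}, a strictly mixed $r_1^*$ is optimal only in Case~3, i.e. when $\ASR(s)=\mu^\ADV$. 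Hence any fully mixed equilibrium must satisfy both indifference conditions at once, which are precisely the equations~\eqref{eq:explicit_s1_r1}.

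The key structural observation -- and the reason uniqueness is immediate -- is that these two indifference conditions are cross-coupled yet individually univariate. The adversary's indifference $\ASR(s)=\mu^\ADV$ constrains only the defender's variable $s_1$, while the defender's indifference $\DCCR=\Delta\mu^\DEF$ constrains only the adversary's variable $r_1$. Using $\ASR(s)=1-s_1\rob_1-(1-s_1)\rob_2$ and $\DCCR=\Dacc-r_1\rmax(\Dacc+\Drob)$, each equation is affine with nonzero slope, since $\Drob>0$ and $\Dacc+\Drob>0$ by the ordering~\eqref{eq:ordering}. I would then solve them, recovering $s_1^*=\condADV$ and $r_1^*\rmax=\condDEF$; the nonvanishing slopes guarantee each solution is the only one.

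It remains to confirm that this forced candidate is genuinely a fully mixed equilibrium. The hypotheses $\rob_1<1-\mu^\ADV<\rob_2$ and $0<\precondDEF<\rmax$ are exactly conditions~\eqref{eq:fullymixedcond}, which I would verify place $s_1^*,r_1^*\in(0,1)$. At this profile each player is indifferent across its two pure strategies, so by Equations~\eqref{eq:EPPSadv}--\eqref{eq:EPPSdef} the corresponding $\EPPS$ is constant in that player's strategy and no pure deviation is profitable; therefore every mixed strategy -- in particular the prescribed one -- is a best response, and mutual best response yields the Nash property. Since the previous paragraph showed that any fully mixed equilibrium is forced to coincide with this solution, it is the unique one.

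I expect the only genuine subtlety to be the bookkeeping of the two preconditions: one must check that the interval hypotheses translate correctly into $s_1^*,r_1^*\in(0,1)$ (and not merely into $[0,1]$, which would instead admit degenerate pure best responses). The algebra itself is routine -- two affine solves -- so the main conceptual work lies in the reduction of the first paragraph, namely recognizing that strict mixing on each side forces the opponent's indifference and thereby decouples the system.
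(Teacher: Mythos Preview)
Your reduction is clean and correct for the claim ``$(s^*,r^*)$ is the unique \emph{fully mixed} Nash equilibrium'': you exploit exactly the right structural fact, namely that each player's indifference condition pins down the \emph{opponent's} mixing probability, and the two affine equations decouple with nonzero slopes by~\eqref{eq:ordering}. This is more direct than the paper's argument for that particular reading of the theorem.

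However, the paper's proof actually establishes a stronger statement: $(s^*,r^*)$ is the unique Nash equilibrium \emph{tout court}, not merely the unique fully mixed one. Your argument does not rule out equilibria in which one or both players sit at a vertex of the simplex; you only show that \emph{if} both are strictly mixing, the profile is forced. The paper closes this gap by a monotonicity-and-contradiction chase: assuming some other equilibrium $(\hat s,\hat r)$, it uses the strict monotonicity of $s_1\mapsto\ASR(s)$ and $r_1\mapsto\DCCR$ together with the best-response tables to cycle $\hat s_1<s_1^*\Rightarrow\hat r=(0,1)\Rightarrow\hat s=(1,0)\Rightarrow\hat s_1>s_1^*$, a contradiction (and symmetrically for the other orderings). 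This step genuinely requires the strict inequalities in the hypotheses, which place $s_1^*,r_1^*$ in the open interval $(0,1)$ and hence make the cycle land strictly on the wrong side.

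If you intend only uniqueness within the class of fully mixed equilibria, say so explicitly; if you intend the paper's stronger conclusion, you need to add a short argument excluding pure and boundary profiles, for which the paper's monotone best-response cycle is the natural tool.
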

\begin{proof}
Consider Tables~\ref{tab:AdversaryBestResponse} and \ref{tab:DefenderBestResponse}. By the analysis above, $s_1^*$ and $r_1^*$ are given as in Equation~\eqref{eq:explicit_s1_r1} and therefore the conditions for Case 3 and Case~C are satisfied. That is all $r\in\StrategySetADV$ are a best response to $s^*$ and all $s\in\StrategySetDEF$ are a best response to $r^*$. Furthermore, the conditions in Equation~\eqref{eq:fullymixedcond} are fulfilled and therefore both strategies are mixed strategies. In conclusion, $(s^*,r^*)$ is a fully mixed strategy Nash equilibrium.\\
In order to prove uniqueness of the Nash equilibrium, we first assume there exists another Nash equilibrium $(\hat{s},\hat{r})$. That is, either $\hat{s}_1\neq s_1^*$ or $\hat{r}_1\neq r_1^*$.  We consider only the case $\hat{s}<s^*$, as all other cases are conducted analogously. Observe that 
\[s\mapsto \ASR(s)=1-\rob_2+s_1\Drob\]
strictly increases in $s_1$ and
\[r\mapsto \DCCR=\Dacc - r_1\rmax(\Dacc + \Drob)\]
strictly decreases in $r_1$. Since $\hat{s}$ and $\hat{r}$ are mutual best responses, we have
\begin{align*}
\hat{s}_1<s_1^*&\implies \ASR(\hat{s})<\ASR(s^*)=\mu^\ADV\\
            &\implies \hat{r}=(0,1)\\
            &\implies \Delta\CCR(\hat{r};\rmax) > \Delta\CCR(r^*;\rmax) = \mu^\ADV\\
            &\implies \hat{s}=(1,0) \text{\vspace{1cm}} \implies \hat{s}_1>s_1^* \enspace,
\end{align*}
which is a contradiction. All in all, $(s^*,r^*)$ is a unique Nash equilibrium. \qed

\end{proof}
%\begin{theorem}
%Let $\rob_1\le\mu^\ADV\le\rob_2$ and $0\le\precondDEF\le\rmax$. Furthermore, let either $\mu^\ADV\not\in\{\rob_1,\rob_2\}$ or  
%\end{theorem}
\subsection{Results}

%In order to proof uniqueness of the Nash equilibrium, we first assume there exists another \emph{mixed} strategy Nash equilibrium $(\hat{s},\hat{r})$. By definition, either $\hat{s}_1\not\in\{0,1\}$ or $\hat{r}_1\not\in\{0,1\}$. We discuss the case $\hat{r}_1\not\in\{0,1\}$, as the other case is conducted analogously. Consider Table~\ref{tab:AdversaryBestResponse}. The conditions for Case 1 and 2 are not satisfied with $\hat{r}_1\not\in\{0,\rmax\}$. Case 3 requires $\hat{s}_1 = \condADV = s_1^*$ by Equation~\ref{eq:explicit_s1_r1}. In particular, $\hat{s}_1\not\in\{0,1\}$. This excludes Case A and B from Table~\ref{tab:DefenderBestResponse}. We have $\hat{r}_1 = \condDEF = r_1^*$. All in all, $(s^*,r^*) = (\hat{s},\hat{r})$. We placed no restrictions on $(\hat{s},\hat{r})$ and therefore $(s^*,r^*)$ is a unique mixed strategy nash equilibrium.\\

%%%%%%%%%%%%%%%%%%%%
\begin{figure}[t]
	\centering
	% This file was created by tikzplotlib v0.9.9.
\begin{tikzpicture}[scale=1]

\definecolor{color0}{rgb}{0.75,0.75,0}
\definecolor{color1}{rgb}{0,0.75,0.75}

\begin{axis}[
legend cell align={left},
legend style={
  fill opacity=0.8, 
    draw opacity=1, 
    text opacity=1, 
    draw=black,
  at={(1.05,0.02)},
  anchor=south west,
  draw=white!80!black,
  font=\tiny
},
tick align=outside,
tick pos=left,
x grid style={white!69.0196078431373!black},
xlabel={\(\displaystyle \rob_2\)},
xmin=0, xmax=1,
xtick style={color=black},
xtick={0,0.2,0.4,0.6,1},
xticklabels={0.0,0.2,0.4,0.6,1.0},
y grid style={white!69.0196078431373!black},
ylabel={\(\displaystyle \rob_1\)},
ymin=0, ymax=1,
ytick style={color=black},
unit vector ratio=1 .7
]

%\path [draw=white!41.9607843137255!black, fill=white!41.9607843137255!black, opacity=0.145098039215686]

\path [draw=white!76.8627450980392!black, fill=white!76.8627450980392!black, opacity=0.145098039215686]
(axis cs:1,0.8)
--(axis cs:0.8,0.8)
--(axis cs:0.8,1)
--(axis cs:1,1)
--cycle;

\path [draw=white!41.9607843137255!black, fill=white!41.9607843137255!black, opacity=0.145098039215686]
(axis cs:0.8,0.8)
--(axis cs:0.8,0)
--(axis cs:0,0)
--(axis cs:0,0.8)
--cycle;

\path [draw=white!41.9607843137255!black, fill=black, opacity=1]
(axis cs:0,0.8)
--(axis cs:0,1)
--(axis cs:0.8,1)
--(axis cs:0.8,0.8)
--cycle;
\path [draw=white, postaction={pattern=north west lines, pattern color=white}]
(axis cs:0,0)
--(axis cs:1,1)
--(axis cs:.8,1)
--(axis cs:.8,.8)
--(axis cs:0,.8)
--cycle;
\addplot [semithick, gray, dotted, forget plot]
table {%
0 0.8
1 0.8
};
\addplot [semithick, gray, dotted, forget plot]
table {%
0.8 0
0.8 1
};

\addplot [semithick, only marks, color0, mark=*, mark size=3, mark options={solid}]
table {%
0.3392 0
};
\addlegendentry{Standard - \cite{shafahi2019adversarial}, m=2}
\addplot [semithick, only marks, blue, mark=*, mark size=3, mark options={solid}]
table {%
0.4115 0
};
\addlegendentry{Standard - \cite{shafahi2019adversarial}, m=4}
\addplot [semithick, only marks, green!50!black, mark=*, mark size=3, mark options={solid}]
table {%
0.4682 0
};
\addlegendentry{Standard - \cite{shafahi2019adversarial}, m=8}

\addplot [semithick, only marks, color1, mark=*, mark size=3, mark options={solid}]
table {%
0.4631 0
};
\addlegendentry{Standard - \cite{shafahi2019adversarial}, m=10}

\addplot [semithick, only marks, red, mark=asterisk, mark size=3, mark options={solid}]
table {%
0.458 0.035
};
\addlegendentry{Standard - \cite{madry2017towards} AT}

\addplot [semithick, black, forget plot]
table {%
0 0
1 1
};
\draw (axis cs:0.35,0.15) node[
  scale=0.75,
  anchor=base west,
  text=black,
  rotate=0.0
]{Case 2};
\draw (axis cs:0.8725,0.825) node[
  scale=0.75,
  anchor=base west,
  text=black,
  rotate=0.0
]{Case 1};
\draw (axis cs:0.815,0.3) node[
  scale=0.75,
  anchor=base west,
  text=black,
  rotate=0.0,
  align=left
]{ Case 3 \\(and 1\&2)\\possible};
\draw (axis cs:0.25,0.825) node[
  scale=0.75,
  inner sep=0pt,
  anchor=south,
  text=white,
  rotate=0.0
]{\textcolor{white}{$\uparrow$ $\neg$ Case 2}};
\draw (axis cs:0.8,0.25) node[
  scale=.75,
  inner sep=0pt,
  anchor=south east,
  text=black,
  rotate=0.0
]{$\neg$ Case 1 $\leftarrow$};
\end{axis}

% \begin{axis}[
% axis y line=right,
% tick align=outside,
% x grid style={white!69.0196078431373!black},
% xmin=0, xmax=1,
% xtick pos=left,
% xtick style={color=black},
% y grid style={white!69.0196078431373!black},
% ymin=0, ymax=1,
% ytick pos=right,
% ytick style={color=black},
% ytick={0,0.7,1},
% yticklabel style={anchor=west},
% yticklabels={,\(\displaystyle \mu^\ADV = .7\),}
% ]
% \end{axis}

\draw[overlay, remember picture] (5.48,0) -- (5.48,-0.4) node[below] {$1-\mu^\ADV$};

\draw[overlay, remember picture] (6.8,4.56*0.59) -- (7.15,4.56*0.59) node[right] {$1-\mu^\ADV$};

\end{tikzpicture}
	\caption{Adversary's preferences with regards to $\rob_1$ and $\rob_2$ for $\mu^\ADV$. Decreasing $\mu^\ADV$ shifts the horizontal and vertical lines (captioned ``$1  -\mu^\ADV$'') up and right. This means that area of Case 2 (always attack) is getting bigger and the area Case 3 (mixed strategy is possible) is getting smaller. As discussed earlier, in practice $\mu^\ADV \approx 0.$ This would mean the adversary will always attack. Note that the area above the minor diagonal line is unreachable, because of the assumption in Eq.~\eqref{eq:ordering}. }
	\label{fig:adv_case}
\end{figure}
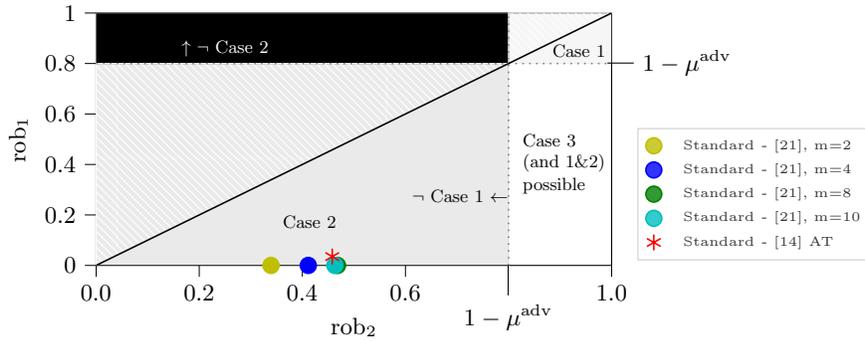

In the previous section, we have identified best responses and their preconditions for both actors given the opponent's strategy (see Tables~\ref{tab:AdversaryBestResponse} and~\ref{tab:DefenderBestResponse}). They allow to identify the set of strategies available to each rationally behaving actor for the given economic factors.

% Adversary's choices are visualized in Figure~\ref{fig:adv_case}, 
% we derive conditions for the adversary that eliminate specific cases independent from the defender's strategy, purely based on the adversary's economic factors ($\mu^\ADV$) and the externally given robustness values ($\rob_{1}, \rob_{2}$). For example, if the precondition for Case 1 is not met (i.\,e., $1-\mu^\ADV\geq\rob_2$), we can deduce that the adversary will never play $r^* = (0,1)$, i.\,e. she will definitely not attack always, but either not attack at all (Case 2) or play a mixed strategy (Case 3).

We visualize the adversary's options in Figure~\ref{fig:adv_case}, where we can see that $\mu^\ADV$ (see lines captioned ``$1 - \mu^\ADV$'') determines the areas, where the adversary will always attack (Case 2), never attack (Case 1), and where we cannot say without considering the defender's actions and she might play a mixed strategy (Case 3). Note that the area above minor diagonal is unreachable (by Eq.~\eqref{eq:ordering}). We can observe that most area is covered by Case 2, which means that the adversary is incentivized to always attack, especially if a value of $\mu^\ADV$ is low, which happens if the costs for being caught $R_-^\ADV$ and her ongoing cost $\OngoingCosts^\ADV$ are low and when the potential reward $R_+^\ADV$ is high. Note that the black area, where neither Case 1, nor Case 2 (and thus also not Case 3) is true, does never fulfill the ordering in Equation~\eqref{eq:ordering} and thus is undefined in our setting.

% The intersection between the cases depend on the adversary's economic factors $\mu^\ADV$. We depict the areas, where the adversary is certainly not in Case 1 (all light-grey areas and black area), not in Case 2 (all dark-grey areas and black area), where she is definitely in Case 1 (fully dark-grey colored area), definitely in Case 2 (fully light-grey area) and where all cases, including the mixed strategy case (Case C) are possible (white area). The diagonal line depicts the condition that $\rob_1 > \rob_2$ which is part of Equation~\eqref{eq:ordering}. All points to the left (the white striped areas and the black area) are not valid as they violate this inequality. Which strategy the adversary plays in the white area depends on the defender's strategy and the resulting $\ASR$ values.

%%%%%%%%%%%%%%%%%%%%%%%%%%%%%%%%%%%%%%%

%%%%%%%%%%%%%%%%%%%% 
\begin{figure}[t]
	\centering
	\newcommand{\strength}{20}
	% This file was created by tikzplotlib v0.9.9.
\begin{tikzpicture}[scale=1]

\definecolor{color0}{rgb}{0.75,0.75,0}
\definecolor{color1}{rgb}{0,0.75,0.75}

\begin{axis}[
legend cell align={left},
legend style={
    fill opacity=0.8, 
    draw opacity=1, 
    text opacity=1, 
    draw=black,
    at={(1.05,0.35)},
    anchor=south west,
    font=\tiny
    },
tick align=outside,
tick pos=left,
x grid style={white!69.0196078431373!black},
xlabel={\(\displaystyle \Drob\)},
xmin=0, xmax=1,
xtick style={color=black},
xtick={0,0.2,0.4,0.6,0.8,1},
xticklabels={0.0,0.2,0.4,0.6,0.8,1.0},
y grid style={white!69.0196078431373!black},
ylabel={\(\displaystyle \Dacc\)},
ymin=-.3, ymax=1,
ytick style={color=black},
unit vector ratio=1 .7
]
\path [draw=white!76.8627450980392!black, fill=white!76.8627450980392!black, opacity=0.145098039215686]
(axis cs:0,-0)
%--(axis cs:0.2,-0.2)
--(axis cs:0.525,0.475)
--(axis cs:0,1)
--cycle;
\path [draw=white!41.9607843137255!black, fill=white!41.9607843137255!black, opacity=0.145098039215686]
(axis cs:0.,0.)
%--(axis cs:0.,-.4)
--(axis cs:0.,-.5)
--(axis cs:1,-.5)
--(axis cs:1,0)
--cycle;

%\path [draw=white!41.9607843137255!black, fill=black, opacity=1]
%(axis cs:0,-.2)
%--(axis cs:.2,-.2)
%--(axis cs:0,-0.4)
%--cycle;

\path [draw=white!41.9607843137255!black, draw opacity=0.145098039215686, postaction={pattern=north east lines, pattern color=white!41.9607843137255!black, fill opacity=0.145098039215686}]
(axis cs:0,1)
--(axis cs:1,1)
--(axis cs:1,0)
%--(axis cs:.7,0.3)
--cycle;
\addplot [semithick, gray, dotted, forget plot]
table {%
0 0
1 0
};

\addplot [black, forget plot]
table {%
0 0
1 0
};
\addplot [semithick, gray, dotted, forget plot]
table {%
0 0
1 0.9
%-0.6 -1
%0.4 0
%1.4 1
};

\addplot [semithick, only marks, color0, mark=*, mark size=3, mark options={solid}]
table {%
0.3392 0.0356
};
\addlegendentry{Standard vs \cite{shafahi2019adversarial}, m=2}
\addplot [semithick, only marks, blue, mark=*, mark size=3, mark options={solid}]
table {%
0.4115 0.0718
};
\addlegendentry{Standard vs \cite{shafahi2019adversarial}, m=4}
\addplot [semithick, only marks, green!50!black, mark=*, mark size=3, mark options={solid}]
table {%
0.4682 0.0904999999999999
};
\addlegendentry{Standard vs \cite{shafahi2019adversarial}, m=8}
\addplot [semithick, only marks, color1, mark=*, mark size=3, mark options={solid}]
table {%
0.4631 0.1107
};
\addlegendentry{Standard vs \cite{shafahi2019adversarial}, m=10}
\addplot [semithick, black, forget plot]
table {%
0 1
1 0
};
\addplot [semithick, only marks, red, mark=asterisk, mark size=3, mark options={solid}]
table {%
0.423 0.079
};
\addlegendentry{Standard vs \cite{madry2017towards}}
\draw (axis cs:0.27,0.2) node[
  scale=0.75,
  anchor=base west,
  text=black,
  rotate=0.0,
  align=left
]{Case C (and A\&B) possible};
\draw (axis cs:0.15,0.45) node[
  scale=0.75,
  anchor=base west,
  text=black,
  rotate=0.0
]{Case B};
\draw (axis cs:0.43,-0.27) node[
  scale=0.75,
  anchor=base west,
  text=black,
  rotate=0.0
]{Case A};
\draw (axis cs:0.6,0.65) node[
  scale=0.75,
  inner sep =0pt,
  anchor=base east,
  text=black,
  rotate=0.0
]{$\neg$ Case A $\nwarrow$};
\draw (axis cs:0.465,-0.025) node[
  scale=0.75,
  inner sep=0pt,
  anchor=north,
  text=black,
  rotate=0.0
]{$\downarrow$ $\neg$ Case B};
\end{axis}

% \begin{axis}[
% axis y line=right,
% tick align=outside,
% x grid style={white!69.0196078431373!black},
% xmin=0, xmax=1,
% xtick pos=left,
% xtick style={color=black},
% y grid style={white!69.0196078431373!black},
% ymin=-1, ymax=1,
% ytick pos=right,
% ytick style={color=black},
% ytick={-1,-0.3,1},
% yticklabel style={anchor=west},
% yticklabels={,\(\displaystyle \mu^{def} = -.3\),}
% ]
% \end{axis}

%\draw[overlay, remember picture] (6.85,5.69*0.767) -- (7.15,5.69*0.767) node[right] {$\frac{\Delta\mu^\DEF+\rmax\Drob}{1-\rmax}$};
\draw[overlay, remember picture] (6.1,3.7) -- (7.15,3.7) node[right] {$\rmax = \frac{\Dacc}{\Dacc+\Drob}$};

\draw[overlay, remember picture] (6.85,1) -- (7.15,1.) node[right] {$\Delta\mu^\DEF$};

\end{tikzpicture}
	\caption{Visualization of the defender cases with regards to $\Dacc$ and $\Drob$ for $\Delta\mu^\DEF = 0$ and $\rmax = 0.45$. Note that lower (negative) $\Delta\mu^\DEF$ values shift the horizontal line that indicates the area where $\neg$ Case B downwards and the diagonal line that indicates $\neg$ Case A to the right.}
	\label{fig:def_case}
\end{figure}
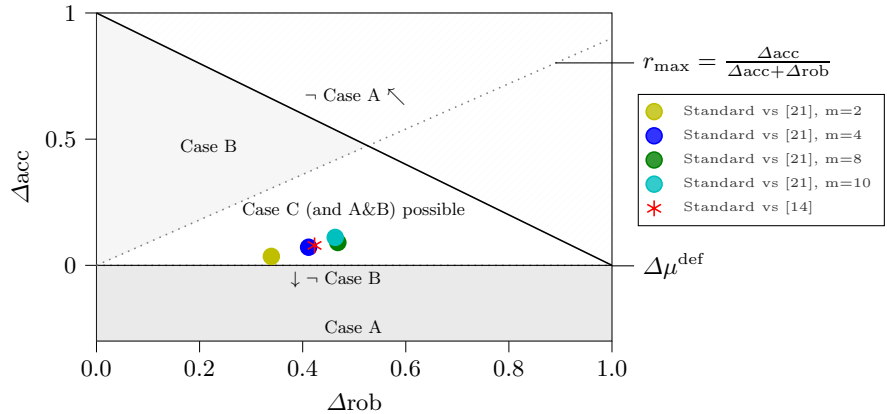

The same visualization for the defender is shown in Figure~\ref{fig:def_case} for a given $\Delta\mu^\DEF = 0$ and $\rmax = 0.45$.
% , we can reformulate the preconditions from Table~\ref{tab:DefenderBestResponse} identify when the defender would be in a specific case, independent of the adversary's strategy. Figure~\ref{fig:def_case} depicts these cases depending on $\Drob$ and $\Dacc$. 
Reachable areas are Case B (between the solid and dotted line), where the defender will never defend, and Case C where possibly a mixed strategy occurs. 
Note, while the slope of the solid line is fixed (by the ordering in Eq.~\eqref{eq:ordering}), the slope of the dotted line depends only on the value of $\rmax$ \footnote{An alternative formulation of the linear equation for the dotted line in Fig.~\ref{fig:def_case} is: $\Dacc = \frac{\rmax\Drob}{1-\rmax}$ (for $\Delta\mu^\DEF=0)$}. 
With decreasing value of $\rmax$, the slope will also decrease,
%Note, that while the slope of the solid line is fixed, the slope of the dotted line depends on $\Delta\mu^\DEF.$ With decreasing value of $\Delta\mu^\DEF,$ the slope of the dotted line will be more horizontal, 
which means that the defender will have less incentive to use the robust classifier (the area of Case B will increase). 
%The slope of $\Delta\mu^\DEF$ depends on the difference of correct classification rates $\CCR$ at the fraction of attacked samples. This means that $\Delta\mu^\DEF$ also proportionally depends on $\rmax,$ and 
Therefore if the proportion of samples the adversary can influence (or attack) $\rmax$ is small, a rational defender might not have an incentive to use the robust model, regardless of the strategy of the adversary. The case, where the defender will always use the robust classifier is not considered, as it would correspond to a case when the robust model has lower costs than the non-robust, even though it might have a lower $\CCR$ (but such a pure strategy can be still a solution of Case C).

The solid diagonal line depicts the condition that $\Dacc + \Drob < 1$, so all points to the right (the striped area) are not valid.\footnote{Note that $\Dacc+\Drob=\acc_1 - (\acc_2 - \rob_2) - \rob_1 < \acc_1 \le 1$, by Equation~\eqref{eq:ordering}.} Further, the ordering (Eq.~\eqref{eq:ordering}) ensures that $\acc_1 > \acc_2$ and thus $\Dacc > 0$. Therefore, all points below the horizontal line at $\Dacc = 0$ are invalid. We include this area into the figure to illustrate when the defender would purely deploy the second model (Case A) without considering the adversary's strategy. This is only the case when $\Delta\mu^\DEF$ is positive which in turn is only true if $\OngoingCosts_1^\DEF > \OngoingCosts_2^\DEF$, i.e. in our scenario, the defended model has lower ongoing costs than the undefended model. 
%

%Paragraph about $\rmax$, if $\rmax>-\Delta\mu$, we are always in case b, if $\Delta\mu \rightarrow 0,$ the line goes through origin.

Colored dots and a star in Figure~\ref{fig:adv_case} and Figure~\ref{fig:def_case} illustrate the CIFAR-10 models proposed in \cite{madry2017towards} and \cite{shafahi2019adversarial}, where the accuracy and robustness values are taken from corresponding publications. We see in Figure~\ref{fig:adv_case} that for these values and the chosen $\mu^\ADV,$ an adversary would always attack, independent of the strategy the defender chooses (Case 1).  
Similarly in Figure~\ref{fig:def_case}, we can see that for the given values and the chosen $\mu^\DEF$ and $\rmax$, the defender is always in Case C, meaning that she might use both models.
Keep in mind that values of $\rmax, \mu^\ADV$ and $\mu^\DEF$ are arbitrarily chosen and the defender will play a model with higher $\CCR$ for that given attack rate (recall that the adversary will likely attack). Needless to say that our chosen value $\rmax = 0.45$ is very high as it means that the adversary can influence up to $45\%$ of samples. Realistically $\rmax$ will be much lower, even as low as one percent, which means that the defender might opt to use the non-robust model, although the adversary will always attack.

\section{Discussion}
\label{sec:discussion}

\begin{figure}[t]
	\centering
    % This file was created by tikzplotlib v0.9.6.
\begin{tikzpicture}[spy using outlines={circle, magnification=4, size=1.5cm, every spy on node/.append style={thick},connect spies}]

\definecolor{color0}{rgb}{0.12156862745098,0.466666666666667,0.705882352941177}
\definecolor{color1}{rgb}{1,0.498039215686275,0.0549019607843137}
\definecolor{color2}{rgb}{0.172549019607843,0.627450980392157,0.172549019607843}
\definecolor{color3}{rgb}{0.83921568627451,0.152941176470588,0.156862745098039}
\definecolor{color4}{rgb}{0.580392156862745,0.403921568627451,0.741176470588235}

\begin{axis}[
width=.75\textwidth,
height=.45\textwidth,
legend cell align={left},
legend style={fill opacity=0.8, draw opacity=1, text opacity=1, at={(0.03,0.03)}, anchor=south west, draw=white!80!black},
tick align=outside,
tick pos=left,
x grid style={white!69.0196078431373!black},
xlabel={Adversary's Strategy (attacking probability)},
xmajorgrids,
xmin=-0.05, xmax=1.05,
xtick style={color=black},
y grid style={white!69.0196078431373!black},
ylabel={Correct classification rate},
ymajorgrids,
ymin=-0.04, ymax=1.04,
ytick style={color=black}
]
\addplot [thick, color0]
table {%
0 0.9501
1 0.000
};
\addlegendentry{Standard}
\addplot [thick, color1]
table {%
0 0.9145
1 0.3392
};
\addlegendentry{m=2}
\addplot [thick, color2]
table {%
0 0.8783
1 0.4115
};
\addlegendentry{m=4}
\addplot [thick, color3]
table {%
0 0.8596
1 0.4682
};
\addlegendentry{m=8}
\addplot [thick, color4]
table {%
0 0.8394
1 0.4631
};
\addlegendentry{m=10}
% absolute coordinate of area to be zoomed in
\coordinate (SpyArea1) at (axis cs: 0.1,0.88);
% absolute coordinate of area showing zoomed
\coordinate (SpyZoom1) at (axis cs: 0.7,0.8);

\coordinate (SpyArea2) at (axis cs: 0.3,0.75);
% absolute coordinate of area showing zoomed
\coordinate (SpyZoom2) at (axis cs: 0.7,0.2);

\end{axis}
\spy on (SpyArea1) in node[fill=white] at (SpyZoom1);
\spy on (SpyArea2) in node[fill=white] at (SpyZoom2);
\end{tikzpicture}
	\caption{Correct classification rate on CIFAR-10 based on the 
	%defended models proposed in
	original data in 
	\cite{shafahi2019adversarial}.}
	\label{fig:Free}
\end{figure}
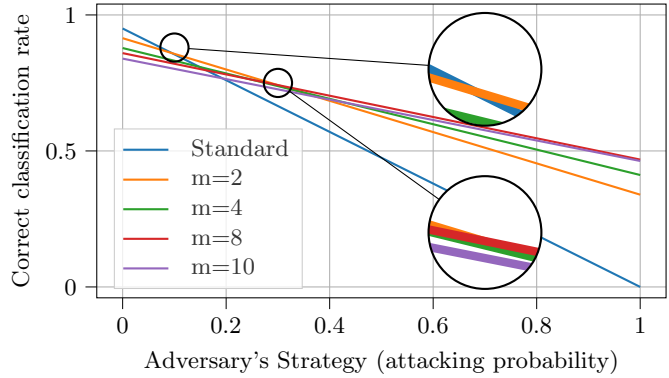

First of all, we extend the analysis of the $\CCR$, as already shown in Figure~\ref{fig:Madry} to the results reported in~\cite{shafahi2019adversarial} in Figure~\ref{fig:Free}. Here, we can see that of all the countermeasures proposed (for CIFAR-10 data), only $m=2$ and $m=8$ would be considered by the defender, again depending on the strategy of the adversary. If the adversary would choose to attack in less than $\approx 10\%$ of the cases or $\rmax \leq 0.1$, the undefended model would be strictly preferable. The proposed ideal solution from~\cite{shafahi2019adversarial}, $m=8$, is only optimal if the adversary chooses to attack with an probability of more than $\approx 30\%$ (or if $\rmax > 0.3$). In between, the countermeasure with $m=2$ is the defender's optimal strategy and all other strategies ($m=4, m=10$) are strictly dominated and thus never optimal (under assumption~\ref{item:D4}).
%\footnote{This holds under assumption~\ref{item:D4}, that misclassifying an adversarial example is as expensive as misclassifying a benign sample. } 
A similar figure was also shown in~\cite{gilmer2018}, but with a completely different focus and lack of theoretical foundation. By analysing our advanced adversarial classification game with the help of $\ASR$ and $\CCR$, we justify these figures by a solid theory and encourage researchers to incorporate this evaluation method when reporting results about new attack methods or countermeasures.

Then, as mentioned in Sec.~\ref{sec:BRAdef}, in many cases $\Delta\mu^\DEF \approx 0$. 
%\subsubsection{Adversarial training and the case $\Delta\mu^\DEF = 0$}
This holds especially true, if we consider the setting where the defender can choose between any standard trained model (model $i=1$) and its adversarially trained counterpart (model $i=2$). Since the architecture of both models is identical, they incur the same ongoing costs to operate. %That is $\OngoingCosts_1^\DEF-\OngoingCosts_2^\DEF=0$ and therefore $\Delta\mu^\DEF=0$. 
By Table~\ref{tab:DefenderBestResponse} the defender's best response is determined by the sign of \DCCR. This can be interpreted geometrically; $\DCCR=0$ (Case C), corresponds to an intersection of the functions 
\begin{align*}
    \rho &\mapsto \CCR_{11}(\rho)=(1-\rho)\acc_1 + \rho\ \rob_{11}\\
    \rho &\mapsto \CCR_{21}(\rho)=(1-\rho)\acc_2 + 
    \rho\ \rob_{21}\enspace,
\end{align*}
where $\rho=r_1\rmax\in[0,\rmax]$. Visually, this intersection can be seen in Figure~\ref{fig:Madry}
%%%% TODO: comment in again if we decide to include the other figure... 
and Figure~\ref{fig:Free}. 
All $\rho$ values before and after the point of intersection correspond to Case B and A respectively. Finally, if $\rmax=1$, then the condition of all cases $A$, $B$ and $C$ are satisfiable by some $r\in\StrategySetADV$. 

Similarly, if $\mu^\ADV$ is close to zero, as motivated above, the adversary will always attack at her maximum. Thus, in such a situation, the defender will face a proportion $\rmax$ adversarial examples out of all samples. 

Contrarily, the defender could also aim at increasing the value of $\mu^\ADV$. This means, either increasing the ongoing costs (e.\,g., by specifically designed countermeasures), decreasing the positive reward, or increasing the negative reward (e.\,g., by law enforcement and legal frameworks that harm an adversary).

%\subsubsection{Looking at plots to find best response for ADVERSARIAL TRAINING ($\Delta\mu^\DEF=0$)}
% Furthermore, if both players are rational and we are in Case 3 and Case C, the
% %Assumptions
% %IF BOTH PLAYERS ARE RATIONAL AND WE ARE IN CASE 3 and C.
% adversary looks at Figure~\ref{fig:Madry} or Figure~\ref{fig:Free} to find the intersection and plays the $r^*_1$ of the intersection. The defender, on the other hand, looks at Figure~\ref{fig:utilities} and plays an $s_1^*$ that corresponds to the ``white line''. This is the mixed strategy Nash equilibrium.

Furthermore, some of our simplifying assumptions might be object to discussion, such as that the adversary has to pay the initial costs, even when she does not attack, that misclassifying adversarial examples costs the defender exactly the same as misclassifying benign samples, that each successful adversarial example gives the same reward to the adversary, or that the ongoing costs of the adversary do not depend on the defender's strategy. Our intention was to advance the theory of adversarial classification games by a first simple cost/reward structure, and thus we leave the mentioned extensions for future work.

Finally, we want to mention that although interesting from a game-theoretical point of view, the situations where adversary and defender play a mixed strategy Nash equilibrium might not be particularly relevant in practice. First of all, as mentioned above, in practice it seems unrealistic to reach one of these states at all, and even if we reach such a state, these Nash equilibria are very unstable. Small changes to the defender's strategy lead to a pure strategy best response of the adversary. Small changes to the adversary's strategy lead to a pure strategy best response of the defender.

\section{Conclusion}
%Here we should give an answer to the paper's title: 
We started this paper with the question: when should you defend your classifier? 

To answer this, we present the \emph{advanced adversarial classification game} that captures all relevant aspects of the interplay between an adversary and a defender in adversarial machine learning. We introduce two new metrics, the attack success rate for the adversary and the correct classification rate for the defender, which enables us to capture both players' expected payoff when being faced with every, possibly mixed, opponent strategy. By analyzing in detail the most common case in the literature, where the adversary has one possible attack and the defender may choose to implement one countermeasure, we are able to identify pure and mixed strategy equilibria for our game. 

By taking into consideration that in realistic scenarios both cost parameters, $\mu^\ADV$ and $\Delta\mu^\DEF$ will be close to zero, we can conclude that the most important parameter of the game is $\rmax$, i.\,e., the proportion of samples an adversary can perturb. As shown in Figures~\ref{fig:Madry} 
%%%% TODO: comment in again if we decide to include the other figure... 
and~\ref{fig:Free}, no rational defender would implement any of these proposed countermeasures if she would expect less than $\approx 17\%$
%%%% TODO: comment in again if we decide to include the other figure... 
, respectively $\approx 10\%$ 
 adversarial examples. Putting this into an universal answer to the question we set out to answer, it means: 
\begin{center}
Do not defend your classifier when $\rmax \leq \frac{\Dacc}{\Dacc+\Drob}$.
\end{center}

\bibliographystyle{splncs04}
\bibliography{main}

\begin{thebibliography}{10}
\providecommand{\url}[1]{\texttt{#1}}
\providecommand{\urlprefix}{URL }
\providecommand{\doi}[1]{https://doi.org/#1}

\bibitem{biggio2018wild}
Biggio, B., Roli, F.: Wild patterns: Ten years after the rise of adversarial
  machine learning. Pattern Recognition  \textbf{84},  317--331 (2018)

\bibitem{brendel2017decision}
Brendel, W., Rauber, J., Bethge, M.: Decision-based adversarial attacks:
  Reliable attacks against black-box machine learning models. arXiv preprint
  arXiv:1712.04248  (2017)

\bibitem{bruckner2011stackelberg}
Br{\"u}ckner, M., Scheffer, T.: Stackelberg games for adversarial prediction
  problems. In: Proceedings of the 17th ACM SIGKDD International Conference on
  Knowledge Discovery and Data Mining. pp. 547--555 (2011)

\bibitem{cohen2019certified}
Cohen, J., Rosenfeld, E., Kolter, Z.: Certified adversarial robustness via
  randomized smoothing. In: International Conference on Machine Learning. pp.
  1310--1320. PMLR (2019)

\bibitem{dalvi2004adversarial}
Dalvi, N., Domingos, P., Sanghai, S., Verma, D.: Adversarial classification.
  In: Proceedings of the tenth ACM SIGKDD International Conference on Knowledge
  Discovery and Data Mining. pp. 99--108 (2004)

\bibitem{dritsoula2017game}
Dritsoula, L., Loiseau, P., Musacchio, J.: A game-theoretic analysis of
  adversarial classification. IEEE Transactions on Information Forensics and
  Security  \textbf{12}(12),  3094--3109 (2017)

\bibitem{gilmer2018}
Gilmer, J., Adams, R.P., Goodfellow, I., Andersen, D., Dahl, G.E.: Motivating
  the rules of the game for adversarial example research. arxiv  (2018),
  \url{https://arxiv.org/abs/1807.06732}

\bibitem{globerson2006nightmare}
Globerson, A., Roweis, S.: Nightmare at test time: robust learning by feature
  deletion. In: Proceedings of the 23rd International Conference on Machine
  Learning. pp. 353--360 (2006)

\bibitem{Grosse17}
Grosse, K., Papernot, N., Manoharan, P., Backes, M., McDaniel, P.: Adversarial
  examples for malware detection. In: European Symposium on Research in
  Computer Security. pp. 62--79. Springer (2017)

\bibitem{grosshans2015solving}
Gro{\ss}hans, M., Scheffer, T.: Solving prediction games with parallel batch
  gradient descent. In: Joint European Conference on Machine Learning and
  Knowledge Discovery in Databases. pp. 152--167. Springer (2015)

\bibitem{he2016deep}
He, K., Zhang, X., Ren, S., Sun, J.: Deep residual learning for image
  recognition. In: Proceedings of the IEEE Conference on Computer Vision and
  Pattern Recognition. pp. 770--778 (2016)

\bibitem{krizhevsky2012imagenet}
Krizhevsky, A., Sutskever, I., Hinton, G.E.: Imagenet classification with deep
  convolutional neural networks. In: Advances in Neural Information Processing
  Systems. pp. 1097--1105 (2012)

\bibitem{leyton2008essentials}
Leyton-Brown, K., Shoham, Y.: Essentials of game theory: A concise
  multidisciplinary introduction. Synthesis lectures on artificial intelligence
  and machine learning  \textbf{2}(1),  1--88 (2008)

\bibitem{madry2017towards}
Madry, A., Makelov, A., Schmidt, L., Tsipras, D., Vladu, A.: Towards deep
  learning models resistant to adversarial attacks. arXiv preprint
  arXiv:1706.06083  (2017)

\bibitem{nash1951non}
Nash, J.: Non-cooperative games. Annals of Mathematics pp. 286--295 (1951)

\bibitem{papernot2016transferability}
Papernot, N., McDaniel, P., Goodfellow, I.: Transferability in machine
  learning: from phenomena to black-box attacks using adversarial samples.
  arXiv preprint arXiv:1605.07277  (2016)

\bibitem{papernot2017practical}
Papernot, N., McDaniel, P., Goodfellow, I., Jha, S., Celik, Z.B., Swami, A.:
  Practical black-box attacks against machine learning. In: Proceedings of the
  2017 ACM on Asia conference on computer and communications security. pp.
  506--519 (2017)

\bibitem{pinot2020randomization}
Pinot, R., Ettedgui, R., Rizk, G., Chevaleyre, Y., Atif, J.: Randomization
  matters how to defend against strong adversarial attacks. In: International
  Conference on Machine Learning. pp. 7717--7727. PMLR (2020)

\bibitem{salman2019provably}
Salman, H., Yang, G., Li, J., Zhang, P., Zhang, H., Razenshteyn, I., Bubeck,
  S.: Provably robust deep learning via adversarially trained smoothed
  classifiers. arXiv preprint arXiv:1906.04584  (2019)

\bibitem{schuurmans2016deep}
Schuurmans, D., Zinkevich, M.A.: Deep learning games. In: Advances in Neural
  Information Processing Systems. pp. 1678--1686 (2016)

\bibitem{shafahi2019adversarial}
Shafahi, A., Najibi, M., Ghiasi, M.A., Xu, Z., Dickerson, J., Studer, C.,
  Davis, L.S., Taylor, G., Goldstein, T.: Adversarial training for free! In:
  Advances in Neural Information Processing Systems. pp. 3358--3369 (2019)

\bibitem{Szegedy13}
Szegedy, C., Zaremba, W., Sutskever, I., Bruna, J., Erhan, D., Goodfellow, I.,
  Fergus, R.: Intriguing properties of neural networks. In: International
  Conference on Learning Representations (ICLR) (2014)

\bibitem{Xu18}
Xu, W., Evans, D., Qi, Y.: Feature squeezing: Detecting adversarial examples in
  deep neural networks. In: Network and Distributed System Security Symposium
  (2018)

\bibitem{zhang2019you}
Zhang, D., Zhang, T., Lu, Y., Zhu, Z., Dong, B.: You only propagate once:
  Accelerating adversarial training via maximal principle. In: Advances in
  Neural Information Processing Systems. pp. 227--238 (2019)

\end{thebibliography}

\end{document}